\newcommand{\OO}{\mathcal{O}}
\newcommand{\tOO}{\wt{\OO}}
\def\poly{\mathop{\mbox{poly}}}
\newcommand{\siprod}[2]{\langle#1,#2\rangle}
\newcommand{\wh}{\widehat}
\newcommand{\wt}{\widetilde}
 \newcommand{\A}{\mathcal{A}}
\newcommand{\unif}{\mathsf{Unif}}
\newcommand{\SEP}{\mathsf{SEP}}
\newcommand{\hattheta}{\widehat{\theta}}
\newcommand{\inner}[1]{\left\langle #1 \right\rangle}
\newcommand{\hatq}{\widehat{q}}
\newcommand{\ftrl}{\textbf{\textup{FTRL}}\xspace}
\newcommand{\bias}{\textbf{\textup{Bias}}\xspace}
\newcommand{\bonus}{\textbf{\textup{Bonus}}\xspace}
\newcommand{\calA}{\mathcal{A}}
\newcommand{\dd}{\mathrm{d}}
\newcommand{\hatSigma}{\widehat{\Sigma}}
\newcommand{\hatx}{\widehat{x}}
\newcommand{\calD}{\mathcal{D}}
\newcommand{\hatPi}{\widehat{\Pi}}
\newcommand{\empD}{\widetilde{\calD}}
\newcommand{\alg}{\mathsf{ALG}\xspace}
\newcommand{\disc}{\Pi_{\epsilon}}
\newcommand{\lin}{\mathrm{lin}}
\newcommand{\gr}{\text{gr}}
\newcommand{\sm}{\text{sm}}
\newcommand{\calS}{\mathcal{S}}
\newcommand{\cw}[1]{{\color{red}CW:#1}}
\newcounter{protocol}
\declaretheorem{lemma}
\DeclareMathOperator*{\argmax}{arg\,max}
\DeclareMathOperator*{\argmin}{arg\,min}
\DeclareMathOperator*{\Exp}{\mathbb{E}}  % Expectation
\DeclareMathOperator*{\Rad}{Rad}         % Rademacher complexity
\DeclareMathOperator*{\Ndim}{NatDim}       % Natarajan's dimension
\newtheorem{theorem}{Theorem}
\newtheorem{assumption}{Assumption}
\newtheorem{corollary}{Corollary}
\newtheorem{definition}{Definition}
\newcommand{\vertex}{\mathrm{vert}}
\newcommand{\norm}[1]{\left\lVert #1\right\lVert}
\newcommand{\ind}[1]{\mathbb{I}\left\{#1\right\}}
\newcommand{\Reg}[1]{\mathrm{Reg}_{T}(#1)}
\newcommand{\X}{\mathcal{X}}
\newcommand{\E}[1]{\Exp\left[#1\right]}
\newcommand{\iprod}[2]{\left\langle #1,#2\right\rangle}
\newcommand{\reals}{\mathbb{R}}
\newcommand{\Cclass}{\mathcal{C}}     % Set of multi-class classifiers
\newcommand{\Pilin}{\Pi_\textrm{lin}}
\newcommand{\CW}[1]{{\color{orange}[CW:#1]}}
\newcommand{\julia}[1]{{\color{purple}[JO:#1]}}
\newcommand{\jack}[1]{{\color{teal}[JM:#1]}}
\title{An Improved Algorithm for\\ Adversarial Linear Contextual Bandits via Reduction}
\author{%
  Tim van Erven\thanks{University of Amsterdam. Email: \texttt{tim@timvanerven.nl}} \and
  Jack Mayo\thanks{University of Amsterdam; Kurtos.ai. Email: \texttt{jackjamesmayo@gmail.com}} \and
  Julia Olkhovskaya\thanks{Delft University of Technology. Email: \texttt{julia.olkhovskaya@gmail.com}} \and
  Chen-Yu Wei\thanks{University of Virginia. Email: \texttt{chenyu.wei@virginia.edu}}
}
\date{June 1, 2026$^{**}$ }
\DeclareMathOperator{\conv}{conv}
\DeclareMathOperator{\cone}{cone}
\newcommand{\approxlcb}{\widehat{\Omega}}    % Approximated version of Omega (previously tilde{Omega})
\newcommand{\todo}[1]{{\color{magenta} TODO: #1}}
\newcommand{\tim}[1]{{\color{magenta} Tim: #1}}
\renewcommand{\tim}[1]{}
\renewcommand{\jack}[1]{}
\renewcommand{\cw}[1]{}
\begin{document}

\maketitle
{\renewcommand{\thefootnote}{$**$}\footnotetext{An earlier version of this work appeared at NeurIPS 2025. Compared to the earlier version, the current version removes the assumption of $\calA_t$ being a polytope with a finite number of constraints, and only assumes a linear optimization oracle for $\calA_t$. }}

\begin{abstract}%
We present an oracle-efficient, near-optimal algorithm for linear contextual bandits with adversarial losses and stochastic action sets, only requiring a linear optimization oracle for the action sets in each round. Our approach reduces this setting to misspecification-robust adversarial linear bandits with fixed action sets. Without knowledge of the context distribution or access to
a context simulator, the algorithm achieves $\tOO(\min\{d^2\sqrt{T}, \sqrt{d^3T\log K}\})$ regret and runs in $\poly(d,T)$ time plus $\poly(d,T)$ calls to the linear optimization oracles, where $d$ is the feature dimension,  $K$ is an upper bound on the number of actions in each round, and $T$ is number of rounds.  This resolves the open question by \citet{liu2024bypassing} on whether one can obtain $\poly(d)\sqrt{T}$ regret in polynomial time independent of the number of actions.  For the important class of combinatorial bandits with adversarial losses and stochastic action sets, our algorithm is the first to achieve $\poly(d)\sqrt{T}$ regret in polynomial time, while no prior algorithm achieves even $o(T)$ regret in polynomial time to our knowledge. When a simulator is available, the regret bound can be improved to $\tOO(d\sqrt{L^\star})$, where $L^\star$ is the cumulative loss of the best policy.
\end{abstract}

%\begin{keywords}%
%  List of keywords%
%  \todo{Put in keywords}
%\end{keywords}

\section{Introduction}

We consider the following linear contextual bandit problem: At each round $t = 1, \ldots, T$, the environment generates a hidden loss vector $\theta_t \in \mathbb{R}^d$ and an action set $\mathcal{A}_t \subset \mathbb{R}^d$. The learner observes $\mathcal{A}_t$, selects an action $a_t \in \mathcal{A}_t$, and incurs loss $a_t^\top \theta_t$. The goal is to compete with the best fixed policy—defined as a mapping from an action set to an element in it. This setting generalizes the classical linear bandit model by allowing the action sets $\mathcal{A}_t$ to vary stochastically over time. Crucially, each $\mathcal{A}_t$ encodes the \emph{context} based on which the learner makes decisions. In this work, $\mathcal{A}_t$ is called \emph{context} or \emph{action set} interchangeably. 

This framework is applicable in settings such as healthcare and
recommendation systems, where decisions must be made conditional on
context. Prior work on contextual bandits has studied a variety of
assumptions on how losses and contexts are generated. While much of the
literature assumes i.i.d. losses and arbitrarily chosen action sets (for
which a well-known algorithm is LinUCB \citep{li2010contextual}), we focus on the complementary regime: the action sets are drawn i.i.d. from a fixed distribution $\calD$, while the losses may be chosen adversarially.

A first computationally efficient algorithm for this setting was
proposed by \citet{neu20} under the assumption that the context (i.e.,
action set) distribution is \emph{known}. Since an action set is a
subset of $\mathbb{R}^d$ (i.e., it lies in the space
$2^{\mathbb{R}^d}$), the distribution over action sets is in the space
$\Delta(2^{\mathbb{R}^d})$, which is generally intractable to represent
efficiently. This assumption was later removed by subsequent efficient
algorithms \citep{luo2021policy, sherman2023improved, dai2023refined,
liu2024bypassing}. In the setting where the learner has access to a
simulator that can generate free contexts (i.e., the learner is able to
sample contexts from $\calD$ as many times as they want without
incurring cost), \citet{dai2023refined} shows that a near-optimal regret
bound of $\widetilde{\mathcal{O}}(\min \{d\sqrt{T}, \sqrt{dT\log K}\})$
is achievable, where $K=\max_t |\calA_t|$.
\tim{How can we reconcile the
claim that $O(d\sqrt{T})$ is the optimal rate with the rate
$\sqrt{dT\log K}$ in the second row of the table?}\jack{I think the claim here is simply that it should be the near-optimal action set-independent regret bound. We know that in the linear CB case (similar to the linear bandit case), one can get a K-free or K dependent rate, trading off for a factor of d}\cw{changed the lower bound to the minimum of the two}
When the learner
has neither
knowledge of the context distribution nor simulator access to random
context samples, 
the best known
results are by \citet{liu2024bypassing}: they provide an algorithm with
near-optimal regret $\tOO(d\sqrt{T})$ with run time $T^{\Omega(d)}$ and
another algorithm with regret $\tOO(d^2 \sqrt{T})$ with run time
$\poly(d, K, T)$.  Notably, while the regret
bound of this last algorithm is independent of the number of actions
$K$,  its computational complexity scales polynomially in $K$. In fact,
this is the case for all previous algorithms as well
\citep{luo2021policy, sherman2023improved, dai2023refined,
liu2024bypassing, liu2024towards}. This makes them unsuitable for many important combinatorial problems (e.g., $m$-set, shortest path, flow, bipartite matching), where $K$ is usually exponentially large in the dimension $d$. %(e.g., in $m$-set, $K$ is the number of different ways to choose $m$ elements from $d$ elements, and in shortest path, $K$ is the number of paths in a graph with $d$ edges). 
\renewcommand{\arraystretch}{1.5}
\begin{table}[t]\label{tab:comparison}
   \center
   \caption{Comparison with state-of-the-art results in adversarial
   linear contextual bandits. $d$ is the feature dimension, and $K$ is an
 upper bound on the number of actions. }
  % Tim: I changed \xmark and \cmark to 'no' and 'yes', because \cmark
  % looks like a good thing and \xmark like a bad thing, which is the
  % opposite of what we want to convey
  %\hspace{-39pt}
   \small
   \begin{tabular}{|c|c|c|c|c|}
       \hline 
       Algorithm & Regret & Computation & Simulator & Feedback  \\
       & (omitting $\log(dT)$ factors) &  & & \\
       \hline 
       \cite{schneider2023optimal}& \makecell{$\sqrt{dT}$ for the special case \\
       $\calA_t\subseteq \{\mathbf{e}_1, \ldots, \mathbf{e}_d\}$
       } & $\poly(d,T)$ & no & bandit \\ 
       \hline
       \cite{neu2014online} & $(dT)^{2/3}$ & \makecell{$\poly(d,T)$ plus \\ $\poly(d,T)$ oracle calls} & no & semi-bandit \\ 
       \hline
       \cite{dai2023refined} & $\min\{d\sqrt{T}, \sqrt{dT\log K}\}$ & $\poly(d, K, T)$ &  yes & bandit \\
       \hline
       \cite{liu2024bypassing} & $d\sqrt{T}$ & $K\cdot T^{\Omega(d)}$ &  no & bandit \\
       \hline
       \cite{liu2024bypassing}  & $d^2\sqrt{T}$ & $\poly(d,K,T)$ & no & bandit \\
       
       \hline
         Ours  & $\min\{d^2\sqrt{T}, \sqrt{d^3T\log K}\}$ 
       & \multirow{2}{*}{\makecell{$\poly(d,T)$ plus \\ $\poly(d,T)$ oracle calls}} 
       & no & bandit \\ 
       \cline{1-2}\cline{4-5}
       Ours  & $d\sqrt{L^\star}$ 
       &  
       & yes & bandit \\ 
       \hline
   \end{tabular}
\end{table}

Our work gives the first algorithm whose computational complexity does not explicitly
scale with the number of actions, making adversarial linear contextual
bandits applicable to a much wider range of problems. Assuming access to linear optimization oracles for the individual action sets in each round, without simulator
access, our method achieves regret $\tOO(\min\{d^2 \sqrt{T}, \sqrt{d^3T\log K}\})$ and with
simulator access this can be improved to
$\tOO(d\sqrt{L^\star})$, where $L^\star = O(T)$ is the cumulative loss of the best policy.
Our algorithm runs in $O(d, T)$ time plus $O(d,T)$ calls to the linear optimization oracle. %To our knowledge, this is the first such algorithm for adversarial linear contextual bandits, whether the learner has access to a simulator or not, as all previous algorithms require a run time that explicitly depends polynomially on the number of actions. 

%Notice that $C\leq K+1$ in general, as the convex hull of an action set
%of size $K$ can be written as a linear program with at most $K+1$ constraints. 
In many combinatorial problems, a linear optimization oracle runs in
$O(d)$ time, while the number of actions is $K=2^{\Omega(d)}$. For
example, this is the case for the shortest-path problem, where $d$ is the number of edges and $K$ is the number of paths.  %For example, in a shortest path problem with $d$ edges, the set of all paths can be described by a linear program with $O(d)$ constraints, while the number of paths could be of order $2^{\Omega(d)}$. 
%Second, our algorithm achieves $\tOO(d^{1.5}\sqrt{T})$ regret without access to simulator, improving over the state-of-the-art polynomial-time algorithm by \cite{liu2024bypassing}. %which is previously only achievable via the highly inefficient EXP4 algorithm that incurs $T^{\Omega(d)}$ computational cost \citep{liu2024bypassing}. 
For combinatorial problems with stochastic action sets and adversarial
losses, we are only aware of \cite{neu2014online} who studied the case
where the learner has \emph{semi-bandit} feedback. Their algorithm
achieves $\tOO\big((dT)^{2/3}\big)$\tim{$O$ or $\tOO$?} regret with $\poly(d,T)$ calls to the linear optimization oracle. Compared to their work, our work
weakens the assumption on the feedback (from semi-bandits to
full-bandits) and improves the regret bound (from $T^{2/3}$ to $\sqrt{T}$).

It remains open how to efficiently achieve the near-optimal bound $\tOO(\min\{d\sqrt{T}, \sqrt{dT\log K}\})$ without simulators, even when the learner is willing to pay $\poly(d,K,T)$ computation.  This has only been resolved in the special case where action sets are subsets of the standard basis $\{\mathbf{e}_1, \ldots, \mathbf{e}_d\}$, which is also known as the sleeping bandits problem  \citep{schneider2023optimal}.   % regret bound. In addition to tightening the bound of \cite{liu2024bypassing}, we partially resolve a major open question left in their work. For action sets that can be described by a polynomial number of constraints but have an exponential number of vertices, their computational complexity scales with the number of vertices, while ours only scales with the number of constraints (see Section~1.2 of \cite{liu2024bypassing} for a discussion about computational efficiency).  This allows us to handle a wider class of problems efficiently including combinatorial bandits with stochastic action sets \citep{neu2014online}. 

%Our technique also allows us to resolve an open problems left by \cite{olkhovskaya2024first} about data-dependent regret bound for adversarial linear contextual bandits. They studied a slightly different formulation (hence the achievable regret bound has to be polynomial in $K$) and obtained a $K\sqrt{dL^\star}$ regret bound where $L^\star$ is the cumulative loss of the best policy under the assumption that the contexts follows a log-concave distribution.  Our technique allows to 

%Our second contribution  is to replace the worst-case rate by adaptive bounds. 
%The  work by \cite{olkhovskaya2024first} replaces the worst-case rate by adaptive, first-order regret bound of $\tOO(K \sqrt{d L^*_T})$, where $L^*_T$ is the cumulative loss incurred by the optimal policy. These first-order bound offers stronger regret guarantees in cases where $T$ is large but losses vary minimally, or when there exists a policy with low cumulative loss.  However, \cite{olkhovskaya2024first} assumes perfect knowledge of the context distribution and restricts it to log-concave distributions.
%In our work, we relax both of these assumptions by allowing for any bounded-support context distribution and requiring only access to a context simulator. 

Our result is achieved by establishing a novel and computationally
efficient reduction from \emph{adversarial linear contextual bandits} to
\emph{adversarial linear bandits with misspecification}. Linear bandits
can be viewed as linear contextual bandits with a fixed context and is
therefore less challenging than linear contextual bandits. This
reduction scheme is related to the work of \cite{hanna2023contexts}, which reduces stochastic linear contextual bandits (with
both stochastic contexts and stochastic losses) to misspecified
stochastic linear bandits.  Compared to \cite{hanna2023contexts}, we make two key advances: First, our reduction is polynomial-time and operates under the mild assumption of having access to a linear optimization oracle for individual action sets, whereas \cite{hanna2023contexts} focus solely on the regret bound and it remains open how their reduction can be implemented in polynomial time. Second, we study the strictly more general setting with adversarial losses. A more detailed comparison with \cite{hanna2023contexts} is provided in Appendix~\ref{app: comparison}.  %our problem is strictly more
%general  The algorithm of \cite{hanna2023contexts} heavily relies on
%the learner's ability to estimate the underlying loss vector $\theta_\star$, while this is impossible in our case where the loss vector changes over time. 

Finally, regarding results in terms of $L^\star$, we remark that our bound
of $\tOO(d\sqrt{L^\star})$ can be specialized to the setting of
\cite{olkhovskaya2024first} (a slightly different formulation of
adversarial contextual bandits). We then obtain a slightly worse rate but, importantly, remove their unrealistic assumption that the context distribution is log-concave.

\tim{Make LaTeX place this table earlier}
\tim{A bit weird to omit log factors except for one algorithm. I suspect
this is because $K$ may be exponential in $d$, so then $\log K$ factors
matter. But then we have to report $\log K$ factors for all algorithms!}\cw{changed}
\section{Problem Description and Main Results} 

\paragraph{Notation}
%Let $e_j$ denote the $j$-th standard basis vector, and let $\Delta_K =
%\{\lambda \in \reals_+^K : \sum_{a=1}^K \lambda_a = 1\}$ denote the
%probability simplex for $K$ outcomes. 
Suppose $\calA\subset \reals^d$ is a set of vectors. Then
the convex hull of $\calA$ is denoted as $\conv(\calA)=\{
x=\sum_{i=1}^{k}\lambda_{i}a_{i} : k\in \mathbb{N}, \sum_{i=1}^k \lambda_i=1, \lambda_i\geq 0, a_{i}\in \calA
\}$. If $\calA$ is a polytope, we denote its set of vertices as
$\vertex(\calA)$, and denote the normal cone of $\calA$ at a vertex $v\in\vertex(\calA)$ as $\mathcal{N}(\calA,v)=\{y\in \reals^d:\max_{x\in \calA}\iprod{y}{x-v}\leq 0\}$. 

\subsection{Linear Contextual Bandits}
\label{sec:setting}
%There have been two different protocols of adversarial linear contextual bandits being considered in the literature. They are convertible to each other, but the minimax regret bounds are different. In this paper, we consider both of them. We demonstrate that our reduction can be applied to both. 

%The first protocol has independent parameterization over actions: 

%\begin{protocol}[H]
%   \caption{Independently Parameterized LCB (\textbf{IP})   \citep{neu20}}
%   The adversary secretly chooses $TK$ loss vectors $(\theta_{t,k})_{t\in[T], k\in[K]}$ where $\theta_{t,k}\in \mathbb{R}^d$ for all $t,k$.  \\
%   \For{$t=1,2,\ldots, T$}
%   {
%     A context vector $X_{t}\in \reals^{d}$ is drawn from $X_{t}\overset{\text{i.i.d.}}{\sim} \mathcal{D}$ and revealed to the learner. \\
%    The learner chooses an action $A_{t}\in [K]$ and observes the loss $\ell_{t}(A_t)=\iprod{X_{t}}{\theta_{t,A_{t}}}$.
    
%   }
%\end{protocol}
%\noindent Under IP, we compare the learner's performance with a fixed policy $\pi\in \Pi$ by the regret:
%\begin{equation*}
%\Reg{\pi}=\sum_{t=1}^{T}\left(\ell_{t}(A_t)-\ell_{t}(\pi(X_{t}))\right)=\sum_{t=1}^{T}\iprod{X_{t}}{\theta_{t,A_{t}}-\theta_{t,\pi(X_{t})}}	
%\end{equation*}
%where policy $\pi$ is a maps the context $X$ to an action distribution $\pi(X)\in \Delta_K$. 

%The second protocol has joint parameterization over actions: 

For simplicity, we consider an oblivious adversary.
%\footnote{There are two kinds of results we can get: 1) regret compared to the best policy in the full policy set $\Pi$ under oblivious adversary, and 2) regret compared to the best policy in the linear policy set $\Pilin$ under adaptive adversary. To achieve 2), it suffices to use misspecification-robust linear bandit algorithms with high-probability bounds in our reduction. This can be achieved by standard techniques of getting high-probability bounds \citep{lee2020bias, zimmert2022return}. To simplify the exposition, we only focus on the first case. } 
Before any interaction with the learner, the adversary secretly chooses $T$ loss vectors $(\theta_{t})_{t\in[T]}$ where $\theta_{t}\in \mathbb{R}^d$ for all $t$. 
For each round $t=1,2,\ldots, T$, an action set $\calA_t\subset
\mathbb{R}^d$ is drawn according to
$\calA_{t}\overset{\text{i.i.d.}}{\sim} \mathcal{D}$ and revealed to the
learner. The learner then  chooses an action $a_t\in\calA_t$ and
observes loss $\ell_t\in[0,1]$ with $\mathbb{E}_t[\ell_t]= a_t^\top
\theta_t$, where $\mathbb{E}_t$ denotes the expectation conditioned on the history up to time $t-1$ and~$a_t$.
The expected regret against a fixed policy $\pi\in \Pi$ is defined as 
\begin{equation*}
\Reg{\pi}=\E{\sum_{t=1}^{T}\inner{a_t - \pi(\calA_t), \theta_t}},
\end{equation*}
where policy $\pi$ maps an action set $\calA\subset \mathbb{R}^d$ to a point $\pi(\calA)\in\conv(\calA)$. \footnote{Defining $\pi(\calA)\in \conv(\calA)$ instead of $\pi(\calA)\in\calA$ only makes the guarantee more general and simplifies the notation. Equivalently, it defines a randomized policy: to execute $\pi(\calA)$, sample $a\in\calA$ such that $\mathbb{E}[a] = \pi(\calA)$. } We assume $\calA \subset \mathbb{B}_2(1)$ for $\calA$ in the support of $\calD$, where $\mathbb{B}_2(r)$ is the Euclidean ball with radius $r$. Let $\Theta^\star = \{\theta\in\mathbb{R}^d:~ \sup_{\calA\in \mathrm{supp}(\calD)} \sup_{a\in\calA}|\inner{a,\theta}|\leq 1 \}$ be the space of possible $\theta_t$. By Lemma~8 of \cite{wei2021learning}, we assume $\Theta^\star\subset\mathbb{B}_2(\sqrt{d})$ without loss of generality. 
%
%We note that both formulations have also been considered in stochastic linear contextual bandits. For example, \cite{li2010contextual} considers IP, while \cite{abbasi2011improved} considers JP. 
%
%\paragraph{Reduction from IP to JP} IP can be reduced to JP as the following. Given an IP instance, let $\calA_t = \{\vec(X_t e_i^\top): i\in[K]\}\subset \mathbb{R}^{dK}$, $\theta_t = (\theta_{t,1}, \ldots, \theta_{t,K})\in\mathbb{R}^{dK}$, and let $a_t = \vec(X_t e_{A_t}^\top)$.  Then $\inner{X_t, \theta_{t,A_t}} = \inner{X_t e_{A_t}^\top ,\theta_t} = \inner{a_t, \theta_t}$. 
%\tim{Randomization is not handled correctly here: I think you want to
%let $a_t = \Exp_t[\vec(X_t e_{A_t}^\top)]$ and then argue that the regret
%in JP equals the \emph{expected} regret in IP.} \cw{Actually, I indeed
%want $a_t = \vec(X_te_{A_t}^\top)$.  This is an element in $\calA_t$ so
%it is aligned with the JP protocol? Maybe it should be clarified that
%$\pi_t(\calA_t)\in\conv(\calA_t)$, but $a_t$ is \emph{sampled} such that
%$\mathbb{E}_t[a_t]=\pi_t(\calA_t)$.  } \tim{Ah, I see. I guess a similar
%remark should be made that $A_t \sim \Pi(\calA_t)$.}
%This shows that an IP formulation can be reduced to a JP with dimension $\tilde{d} = dK$. From now on, we only focus on the JP formulation. 
%

%The goal of the learner is to minimize the regret. We assume the learner
%has access to a linear optimization oracle for every action set $\calA$
%in the support of $\calD$.  

Note that the policy that minimizes the total expected loss is 
\begin{align*}
   \argmin_{\pi\in\Pi} \mathbb{E}\left[\sum_{t=1}^T \inner{\pi(\calA_t), \theta_t}\right]= \argmin_{\pi\in\Pi} \mathbb{E}_{\calA\sim \calD}\left[\inner{\pi(\calA), \sum_{t=1}^T \theta_t}\right],
\end{align*} 
which is attained by the policy $\pi(\calA) = \argmin_{a\in\calA} \big\langle a,\sum_{t=1}^T \theta_t\big\rangle$. Thus, to minimize the expected regret, it suffices to compare the learner to the class of linear classifier policies %generated by a class of maps $c_{\phi}:2^{\mathbb{R}^d}\rightarrow \mathbb{R}^d$ of the form
\begin{equation}\label{eq:linclass}
    \Pilin=\left\{\pi_{\phi}~\big|~ \phi \in \reals^{d}\right\} \text{ where }\pi_{\phi}(\calA)\in\argmin_{a\in\calA}\iprod{a}{\phi}.   
\end{equation}

In this work, we assume the learner can compute $\pi_\phi(\calA)$
efficiently for any $\phi$ and any $\calA$ in the support of $\calD$
through the following oracle:
\begin{assumption}[Linear optimization oracle]\label{assum: linear optimization}
    The learner has access to a linear optimization oracle that takes an
    action set $\calA$ in the support of $\calD$ and a non-zero vector
    $\phi\in \mathbb{R}^d$ as inputs, and returns $\pi_\phi(\calA)\in
    \argmin_{z\in \calA} \inner{z,\phi}$. Any ties are broken in an
    arbitrary, but fixed way; i.e.,\ the same point is returned every time the
    oracle is called with the same action set $\calA$.  
\end{assumption}

%We assume the ties in \eqref{eq:linclass} is broken in the same way as the linear optimization oracle in Assumption~\ref{assum: linear optimization}. In other words, we can identify $\pi_\phi(\calA) = \calO_\lin(\calA, \phi)$. 

%and corresponding policy plays $\pi_{\phi}(\calA)=\delta_{c_{\phi}(x)}$. Let
%
%\[
%  L(\pi) = \Exp\Big[\sum_{t=1}^T \sum_{a=1}^K \pi(X_t)_a \iprod{X_t}{\theta_{t,a}}\Big]
%\]
%
%denote the expected cumulative loss of policy $\pi$. Since the loss function is linear, one can show \citep{neu20} that the policies generated from \eqref{eq:linclass} satisfy $\min_{\pi}L(\pi)=\min_{\phi \in \reals^{dK}}L(\pi_\phi)$. We shall also draw a distinction between deterministic policies above and stochastic policies $\pi: \mathcal{X} \rightarrow \Delta_{K}$ where $A_{t} \sim \pi(\cdot|X_{t})$.

\subsection{Linear Bandits and $\epsilon$-Misspecified Linear Bandits}
The adversarial linear bandit problem with oblivious adversary %(see Part 6 of \cite{lattimorebandits} for a detailed summary) 
is the case in which the adversary decides $(\theta_t)_{t\in[T]}$ before any interaction with the learner. The learner is given a fixed action set $\approxlcb\subset \mathbb{R}^d$. At every round $t\in[T]$, the learner chooses an action $y_{t}\in\approxlcb$ and receives 
$c_t\in[0,1]$ as feedback with $\mathbb{E}_t[c_t]=\iprod{y_{t}}{\theta_{t}}$. The regret with respective to a fixed action $y\in\approxlcb$ is defined as 
\begin{align*}
   \Reg{y} = \E{\sum_{t=1}^T \inner{y_t - y, \theta_t}}. 
\end{align*}
A \emph{misspecified} linear bandit problem is the case where the learner, instead of receiving an unbiased sample of $\inner{y_t,\theta_t}$ as feedback, receives $c_t$ with $|\mathbb{E}_t[c_t]-\inner{y_t, \theta_t}|\leq \epsilon$ for some $\epsilon$ known to the algorithm. 

%The learner is tasked with competing with the best $y^{\star}=\min_{y\in \approxlcb}\sum_{t=1}^{T}\iprod{y}{\theta_{t}}$ in hindsight with only information of the values of elements of the sequence $\iprod{y_{1}}{\theta_{1}},\dots, \iprod{y_{t-1}}{\theta_{t-1}}$ as feedback.

\subsection{Results Overview}\label{sec: poly time}

In this section, we present a general framework that can reduce the adversarial contextual bandit problem to a misspecification-robust linear bandit algorithm defined as the following.  

\begin{definition}[$\alpha$-misspecification-robust adversarial linear bandit algorithm]\label{def: misspec}
A $\alpha$-misspecification-robust linear bandit algorithm over action set $\approxlcb\subset \mathbb{R}^d$ has the following property: with a given 
$\epsilon>0$ and the guarantee that every time the learner chooses $y_t\in \approxlcb$, the loss received 
\tim{Don't use $\ell_t$ for the feedback here, because it is already used for
the loss in the original contextual bandit setting.}\cw{changed to $c_t$}
$c_t\in[0,1]$ satisfies 
\tim{Should this be $\mathbb{E}_t$ (everywhere in the discussion of the bias)?}\cw{Yes, changed. }
\begin{align*}   
   |\mathbb{E}_t[c_t] - \inner{y_t, \theta_t}| \leq \epsilon, 
\end{align*}
the algorithm ensures 
\begin{align}
   \mathbb{E}\left[\sum_{t=1}^T  \inner{y_t, \theta_t}\right] \leq \min_{y\in \approxlcb}\sum_{t=1}^T \inner{y, \theta_t} + \widetilde{O}\left( d\sqrt{T} + \alpha\sqrt{d}\epsilon T \right).  \label{eq: LB bound}
\end{align}
%A misspecification-robust linear bandit algorithm with \underline{small-loss bounds} guarantees the following under the same condition: 
%\begin{align}
%   \mathbb{E}\left[\sum_{t=1}^T  \inner{y_t, \theta_t}\right] \leq \min_{y\in \approxlcb}\sum_{t=1}^T \inner{y, \theta_t} + \widetilde{O}\left( d\sqrt{\sum_{t=1}^T \inner{y,\theta_t} } + d\mathbb{E}\left[\sum_{t=1}^T c_t\right] \right). \label{eq: LB small bound}
%\end{align}
\end{definition}
Notice that there is an $\alpha$ parameter in Definition~\ref{def: misspec} that specifies the dependence of the regret on the misspecification level $\epsilon$. It is known that $\alpha=1$ is the statistically optimal dependence. However, for specific algorithms, we might have $\alpha>1$. 

We establish the following reduction:

%\todo{Add missing boundedness conditions to setting description and
%theorems}

%\begin{theorem}\label{thm: thm with}
%    Given access to a misspecification-robust linear bandit algorithm with small-loss bounds and with access to free contexts, we can achieve $\min_{\pi\in\Pi}\Reg{\pi}\leq \widetilde{O}(d\sqrt{L^\star}+d)$ in JP linear contextual bandits.  
%\end{theorem}

\begin{theorem}\label{thm: thm without }
    Given access to an $\alpha$-misspecification-robust adversarial
    linear bandit algorithm, we can achieve
    $\min_{\pi\in\Pi}\Reg{\pi}\leq \widetilde{O}(d\sqrt{T}+\alpha
    d\sqrt{T\log K})$ in adversarial linear contextual bandits without
    access to a simulator. 
\end{theorem}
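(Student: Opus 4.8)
The plan is to reduce to a single run of a misspecification-robust linear bandit over a fixed, empirically estimated action set. The starting point is that, since the adversary is oblivious and the contexts are i.i.d., a linear classifier $\pi_\phi\in\Pilin$ enters the regret only through its mean action $\bar a_\phi:=\mathbb{E}_{\calA\sim\calD}[\pi_\phi(\calA)]\in\reals^d$: because $\theta_t$ is fixed and $\calA_t\sim\calD$, $\mathbb{E}[\sum_t\langle\pi_\phi(\calA_t),\theta_t\rangle]=\sum_t\langle\bar a_\phi,\theta_t\rangle$, so $\Reg{\pi_\phi}=\mathbb{E}[\sum_t\langle a_t,\theta_t\rangle]-\sum_t\langle\bar a_\phi,\theta_t\rangle$. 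By the discussion after \eqref{eq:linclass}, $\min_{\pi\in\Pi}\mathbb{E}[\sum_t\langle\pi(\calA_t),\theta_t\rangle]=\sum_t\langle\bar a_{\pi^\star},\theta_t\rangle$ for the linear classifier $\pi^\star=\pi_{\phi^\star}$ with $\phi^\star=\sum_t\theta_t$, and $\bar a_{\pi^\star}\in\Omega^\star:=\conv\{\bar a_\phi:\phi\in\reals^d\}$; hence it suffices to make $\mathbb{E}[\sum_t\langle a_t,\theta_t\rangle]$ compete with $\min_{y\in\Omega^\star}\sum_t\langle y,\theta_t\rangle$, which is a linear bandit over $\Omega^\star$. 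The obstacle is that $\Omega^\star$ depends on the unknown $\calD$, and the plan is to feed the robust algorithm an empirical version of $\Omega^\star$, with the estimation error playing the role of the misspecification $\epsilon$.

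To keep the $\sqrt T$ rate without a simulator I would split $[T]$ into $O(\log T)$ doubling epochs, epoch $m$ being a block $E_m$ of $L_m=\Theta(2^m)$ rounds. At the start of epoch $m$ the environment has already revealed $n_m=\Theta(L_m)$ contexts $\calA_1,\dots,\calA_{n_m}$; I set $\tilde a_\phi:=\frac1{n_m}\sum_{i\le n_m}\pi_\phi(\calA_i)$ and $\widehat\Omega_m:=\conv\{\tilde a_\phi:\phi\in\reals^d\}$, and run a fresh instance of the $\alpha$-misspecification-robust algorithm on the action set $\widehat\Omega_m$. The key combinatorial fact is that, although $\Pilin$ is infinite, the behavior $(\pi_\phi(\calA_1),\dots,\pi_\phi(\calA_{n_m}))$ is constant on each cell of the arrangement of the at most $n_m\binom{K}{2}$ hyperplanes $\{\phi:\langle a-a',\phi\rangle=0\}$ (over $i\le n_m$, $a,a'\in\calA_i$), of which there are at most $(n_m K)^{O(d)}$; hence (i) $\widehat\Omega_m$ has at most $(n_m K)^{O(d)}$ vertices, so it can be presented to the algorithm as an explicit polytope, and (ii) the class $\{\calA\mapsto\langle\pi_\phi(\calA),\theta\rangle\}$ has growth function $(n_m K)^{O(d)}$ on any sample, so a standard uniform-convergence bound gives, with probability $1-\poly(1/T)$, $\sup_{\phi}|\langle\bar a_\phi-\tilde a_\phi,\theta_t\rangle|\le\epsilon_m$ for all $t\in E_m$ with $\epsilon_m=\widetilde{O}(\sqrt{d\log(n_mK)/n_m})=\widetilde{O}(\sqrt{d\log K/L_m})$.

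Inside epoch $m$ the reduction runs the robust algorithm with misspecification parameter $\epsilon_m$: when it picks $y_t\in\widehat\Omega_m$, I write $y_t=\sum_j\lambda_j\tilde a_{\phi_j}$ as a convex combination of at most $d+1$ vertices (Carathéodory), draw an index $j$ with probability $\lambda_j$, observe the fresh context $\calA_t$, play $a_t=\pi_{\phi_j}(\calA_t)$, and feed the observed loss $\ell_t$ back as $c_t\in[0,1]$. Conditioning on the algorithm's state and the drawn $y_t$, $\mathbb{E}_t[c_t]=\mathbb{E}_{\calA\sim\calD,\,j}[\langle\pi_{\phi_j}(\calA),\theta_t\rangle]=\langle\sum_j\lambda_j\bar a_{\phi_j},\theta_t\rangle$, which differs from $\langle y_t,\theta_t\rangle=\langle\sum_j\lambda_j\tilde a_{\phi_j},\theta_t\rangle$ by at most $\max_j|\langle\bar a_{\phi_j}-\tilde a_{\phi_j},\theta_t\rangle|\le\epsilon_m$ — precisely the hypothesis of Definition~\ref{def: misspec}. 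Applying \eqref{eq: LB bound} over $E_m$, then using the same $\epsilon_m$-bias bound to replace $\mathbb{E}[\sum_{t\in E_m}\langle y_t,\theta_t\rangle]$ by $\mathbb{E}[\sum_{t\in E_m}\langle a_t,\theta_t\rangle]$, and using $\tilde a_{\phi^\star}\in\widehat\Omega_m$ with $|\langle\bar a_{\phi^\star}-\tilde a_{\phi^\star},\theta_t\rangle|\le\epsilon_m$ to get $\min_{y\in\widehat\Omega_m}\sum_{t\in E_m}\langle y,\theta_t\rangle\le\sum_{t\in E_m}\langle\bar a_{\pi^\star},\theta_t\rangle+\epsilon_m L_m$, the epoch-$m$ contribution to the regret against $\pi^\star$ is $\widetilde{O}(d\sqrt{L_m}+\alpha\sqrt d\,\epsilon_m L_m)=\widetilde{O}((d+\alpha d\sqrt{\log K})\sqrt{L_m})$. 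Summing over the $O(\log T)$ epochs and using $\sum_m\sqrt{L_m}=O(\sqrt T)$ yields $\mathbb{E}[\sum_t\langle a_t,\theta_t\rangle]-\min_{\pi\in\Pi}\mathbb{E}[\sum_t\langle\pi(\calA_t),\theta_t\rangle]\le\widetilde{O}(d\sqrt T+\alpha d\sqrt{T\log K})$, which dominates $\min_{\pi\in\Pi}\Reg{\pi}$; only environment-revealed contexts are ever used, so no simulator is needed.

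I expect the crux to be the combinatorial/statistical step of the second paragraph and its interface with the robust-bandit guarantee: one must see that on any finite sample of contexts the linear classifiers realize only $(n_mK)^{O(d)}$ distinct behaviors — so that it is $\log K$, not an uncontrolled complexity term, that governs the misspecification level $\epsilon_m$ — and one must verify that this estimation error enters \eqref{eq: LB bound} exactly as a per-round bias of size $\epsilon_m$, noting that although the comparator $\bar a_{\pi^\star}$ itself need not lie in $\widehat\Omega_m$, the point $\tilde a_{\phi^\star}\in\widehat\Omega_m$ is $\epsilon_m$-close to it. The doubling schedule is essential rather than cosmetic: with a single static estimation phase the misspecification could only be driven down to $\epsilon\sim T^{-1/3}$, giving $T^{2/3}$ regret, so one must also check that restarting the robust algorithm every epoch costs only the benign factor $\sum_m\sqrt{L_m}=O(\sqrt T)$.
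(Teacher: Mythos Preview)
Your proposal is correct and follows essentially the same route as the paper: the same doubling-epoch scheme over an empirical action set $\widehat\Omega_m$, the same Carath\'eodory-based execution of $y_t$ via linear classifier policies, the same uniform-convergence bound on linear classifiers giving misspecification level $\epsilon_m=\widetilde O(\sqrt{d\log K/n_m})$, and the same epoch-wise application of Definition~\ref{def: misspec} summed to $\widetilde O(d\sqrt T+\alpha d\sqrt{T\log K})$ (this is exactly the paper's Theorem~\ref{thm: main theorem cb} plus the doubling trick in Corollary~\ref{cor: context free}). The only cosmetic differences are that the paper phrases the uniform convergence via the Natarajan dimension of linear multiclass classifiers (Lemma~\ref{lem:uniform_convergence}) rather than a direct hyperplane-arrangement count, and recovers the vertex-to-$\phi$ correspondence via the normal-cone Lemma~\ref{lem: equivalence} rather than by enumerating arrangement cells; both routes yield the same growth bound $(nK)^{O(d)}$ and hence the same $\epsilon_m$.
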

We remark that the $\alpha d\sqrt{T\log K}$ \tim{This rate changed} term in Theorem~\ref{thm: thm without } comes from the misspecification term $\alpha\sqrt{d}\epsilon T$ in \eqref{eq: LB bound}. When the learner has access to a simulator that generates free contexts, the $\epsilon$ can be made arbitrarily small, allowing us to achieve the optimal $d$ dependence. This will be discussed in Section~\ref{sec: small loss bound}. %\CW{pointing the reader to appendix} 
%\todo{At least some comment around here w.r.t. the tradeoff between terms when contexts are free. "In case contexts are free, N can be taken to be large enough so that the first term dominates."}

\section{Reduction from Linear Contextual Bandits to Linear Bandits}\label{sec: linear bandit}
Let $\pi$ denote a policy, which maps any given action set $\calA$ to a
randomized action in $\conv(\calA)$. 
Let $\Pi$ denote
the set of all possible policies. We define the following map
  \begin{equation*}
  \Psi(\pi) = \mathbb{E}_{\calA\sim \mathcal{D}}{\left[\pi(\calA)\right]}, 
   \end{equation*}
   which is the mean action of $\pi$.
  Applying $\Psi$ to all $\pi \in \Pi$, the set $\Omega = \{\Psi(\pi)
  \mid \pi \in \Pi\}$ is induced. Note that, under this map, %and with the definition
 %$\theta_{t}=(\theta_{t,1},\dots,\theta_{t,K})$ 
 the expected loss under actions drawn such that $\mathbb{E}[a_t]=\pi(\calA_t)$ may be written as
 % Julia: should be $\mathbb{E}[a_t]=\mathbb{E} \pi(\calA_t)$
 %$A_{t}\sim \pi(\cdot | X_{t})$ and $X_{t}\sim \mathcal{D}$ may be written as
\begin{align*}
    \mathbb{E}\left[\inner{a_t, \theta_t}\right] = \mathbb{E}_{\calA_t\sim \calD} \left[\inner{\pi(\calA_t), \theta_t}\right] = \inner{\Psi(\pi), \theta_t}. 
\end{align*}

% \begin{equation*}
%  \E{\iprod{X_{t}}{\theta_{t,A}}}=\mathbb{E}_{X_{t}\sim \mathcal{D}}\left[\sum_{a\in[K]}\pi(a|X)\iprod{X}{\theta_{t,a}}\right]=\iprod{\Psi(\pi)}{\E{\theta_{t}}}.	
% \end{equation*}
Accordingly, %the cumulative expected loss of any fixed policy $\pi$ against the sequence $\{\theta_{t}\}_{t=1}^{T}$ is $L(\pi)=\iprod{\Psi(\pi)}{\phi_{T}}$, with $\phi_{T}=\sum_{t=1}^{T}\E{\theta_{t}}$. With this map, 
if the learner draws $a_t$ from policy $\pi_t$ in round $t$, the expected
regret may be written as
 \begin{align*}
 	\Reg{\pi}&=\E{\sum_{t=1}^{T}\iprod{a_t - \pi(\calA_t)}{\theta_{t}}} = \E{\sum_{t=1}^{T}\iprod{\pi_t(\calA_t) - \pi(\calA_t)}{\theta_{t}}} 
 	%&=\E{\sum_{t=1}^{T}\sum_{a\in[K]}(\pi_{t}(a|X_{t})-\pi(a|X_{t}))\iprod{X_{t}}{\theta_{t,a}}}\\
 =\E{\sum_{t=1}^{T}\iprod{\Psi(\pi_{t}) - \Psi(\pi)}{\theta_{t}}}.  	
 \end{align*}
%where the final expectation accounts for any independent randomness in the adversary's policy.
\subsection{Approximating $\Omega$}

%\cw{CW: Tim, could you clean up this subsection, but with the new JP formulation (We still assume the number of actions $|\calA_t|\leq K$)? I think we only need the two lemmas left here. }
%\cw{Now $\approxlcb$ set is as below: 
%\begin{align*}
%    x &= \frac{1}{N} \sum_{i=1}^N  a_i   \\
%    \text{s.t.} \ \  & a_i \in \conv(\calA_i) 
%\end{align*}}
Since $\Omega$ cannot be accessed directly without full knowledge of the
context distribution $\mathcal{D}$, we cannot work with $\Omega$ directly.
Instead, we will therefore approximate $\Omega$ by its empirical
counterpart $\approxlcb$ based on $N$ i.i.d.\ samples $\calA_{1},\dots,
\calA_{N}$ from $\mathcal{D}$:
\begin{align}
\approxlcb = \left\{\hat\Psi(\pi)\Big{|} \pi \in \Pi\right\} &=
\left\{x\in \reals^{d}:x=\frac{1}{N}\sum_{i=1}^{N}a_{i} ~\Bigg|~
a_{i}\in \conv(\calA_{i}), a_i = a_j \text{ when } \A_i
= \A_j\right\},   \label{eqn:approxlcb}
\end{align}
where $\hat\Psi(\pi) = \frac{1}{N} \sum_{i=1}^N \pi(\calA_i)$. It turns
out the constraint that $a_i = a_j$ whenever $\A_i
= \A_j$ actually does not do anything:
\begin{restatable}{lemma}{approxlcbminksum}\label{lem:approxlcb-minksum}
$\approxlcb = \frac{1}{N}\sum_{i=1}^N \conv(\calA_i)$ (where the sum is the Minkowski sum).
\end{restatable}
The proof is in Appendix~\ref{app:approxlcb-minksum}.

We proceed to show that the empirical cumulative loss of any linear
classifier policy on the sample $\calA_1,\ldots,\calA_N$ is close to its
expected cumulative loss, as long as $N$ is large enough:
\begin{restatable}[Uniform Convergence]{lemma}{uniformconvergence}\label{lem:uniform_convergence}
  Consider any loss vector $\theta \in \reals^d$, and suppose that
  $|\calA| \leq K$ and $\max_{a \in \calA} |\langle a, \theta \rangle|
  \leq b$ almost surely. Then, for any $\delta \in (0,1]$, uniformly
  over all linear classifier policies $\pi_\phi$, the difference in
  performance of $\pi_\phi$ on the sample $\calA_1,\ldots,\calA_N$ and
  its expected performance  is at most
  \[
    \sup_{\phi \in \reals^d}
    \Big|
    \big\langle \Psi(\pi_\phi), \theta \big\rangle
    - \big\langle \hat \Psi(\pi_\phi),\theta\big\rangle
    \Big|
      \leq
      2 b\sqrt{\frac{2 d\ln (NK^2)}{N}}
      + b\sqrt{\frac{2 \ln(4/\delta)}{N}}
  \]
  with probability at least $1-\delta$.
\end{restatable}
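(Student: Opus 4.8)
The plan is to read the left-hand side as a uniform deviation between a population mean and an empirical mean over the function class $\mathcal{F}=\{f_\phi:\phi\in\reals^d\}$ on the space of action sets, where $f_\phi(\calA):=\iprod{\pi_\phi(\calA)}{\theta}$. Since inner products are linear, $\iprod{\Psi(\pi_\phi)}{\theta}=\Exp_{\calA\sim\calD}[f_\phi(\calA)]$ and $\iprod{\hat\Psi(\pi_\phi)}{\theta}=\frac1N\sum_{i=1}^N f_\phi(\calA_i)$, so the quantity to bound equals $\sup_{\phi\in\reals^d}\bigl|\Exp_{\calA\sim\calD}[f_\phi(\calA)]-\frac1N\sum_{i=1}^N f_\phi(\calA_i)\bigr|$. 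Because $\pi_\phi(\calA)\in\calA$ and $\max_{a\in\calA}|\iprod{a}{\theta}|\le b$ almost surely, every $f_\phi$ takes values in $[-b,b]$. This is now a standard uniform-convergence problem, and the only nonstandard ingredient is controlling the effective size of $\mathcal{F}$.

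The combinatorial heart is the claim that on any fixed realization $\calA_1,\dots,\calA_N$ with $|\calA_i|\le K$, the behavior vector $(f_\phi(\calA_1),\dots,f_\phi(\calA_N))\in\reals^N$ takes at most $M:=(NK^2)^d$ distinct values as $\phi$ ranges over $\reals^d$. The selected vertex $\pi_\phi(\calA_i)$ can only change when $\phi$ crosses one of the hyperplanes $\{\phi:\iprod{a-a'}{\phi}=0\}$ for $a\ne a'\in\calA_i$; there are at most $N\binom{K}{2}\le NK^2/2$ such hyperplanes across all $i$, and an arrangement of $m$ hyperplanes in $\reals^d$ has at most $\sum_{j=0}^d\binom mj\le(em/d)^d\le(NK^2)^d$ full-dimensional cells, inside each of which every minimizer — hence the whole behavior vector — is constant. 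The only delicate point is the measure-zero set of $\phi$ lying on these hyperplanes, where the tie-breaking rule acts: breaking ties according to a fixed generic secondary direction $u$ makes $\pi_\phi(\calA)$ equal to the minimizer on the cell containing $\phi+\varepsilon u$ for small $\varepsilon>0$, so no new behavior vectors appear, and an arbitrary tie-breaking rule is handled by the same limiting argument. I expect making this step fully rigorous to be the main obstacle; the rest is routine.

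Given $M\le(NK^2)^d$, Massart's finite-class lemma controls the empirical Rademacher complexity of $\mathcal{F}$: the $M$ behavior vectors have Euclidean norm at most $b\sqrt N$, so $\widehat{\mathfrak{R}}_N(\mathcal{F})=\Exp_\sigma\bigl[\sup_\phi\tfrac1N\sum_{i=1}^N\sigma_i f_\phi(\calA_i)\bigr]\le\frac{b\sqrt N\,\sqrt{2\ln M}}{N}=b\sqrt{\tfrac{2d\ln(NK^2)}{N}}$, and this holds for every realization of the sample. Symmetrization then gives $\Exp\bigl[\sup_\phi\bigl(\Exp_{\calA}[f_\phi(\calA)]-\tfrac1N\sum_i f_\phi(\calA_i)\bigr)\bigr]\le 2\,\Exp[\widehat{\mathfrak{R}}_N(\mathcal{F})]\le 2b\sqrt{\tfrac{2d\ln(NK^2)}{N}}$, and likewise for the reversed deviation.

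Finally I would apply McDiarmid's bounded-differences inequality to each of the two one-sided suprema $g(\calA_1,\dots,\calA_N)$: replacing a single $\calA_j$ shifts every $f_\phi$ by at most $2b$, hence the empirical average by at most $2b/N$ and the supremum by at most $2b/N$, so $\Pr[|g-\Exp g|\ge t]\le 2\exp(-Nt^2/(2b^2))$. Setting this equal to $\delta/2$ yields a deviation of $b\sqrt{2\ln(4/\delta)/N}$ for each side, and a union bound over the two sides gives, with probability at least $1-\delta$, $\sup_{\phi}\bigl|\iprod{\Psi(\pi_\phi)}{\theta}-\iprod{\hat\Psi(\pi_\phi)}{\theta}\bigr|\le 2b\sqrt{\tfrac{2d\ln(NK^2)}{N}}+b\sqrt{\tfrac{2\ln(4/\delta)}{N}}$, exactly as claimed.
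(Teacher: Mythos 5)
Your proof is correct and shares the paper's overall skeleton (reduce to uniform convergence of the scalar class $f_\phi(\calA)=\iprod{\pi_\phi(\calA)}{\theta}$, bound its empirical Rademacher complexity via Massart's lemma applied to the finite set of behavior vectors, then concentrate via symmetrization plus McDiarmid, which is exactly the content of the textbook theorem the paper cites). The one genuinely different ingredient is how you bound the number of behavior vectors: the paper recasts $\pi_\phi$ as a linear multiclass classifier with $K$ classes and invokes Natarajan's lemma together with the known bound $\Ndim(\Cclass)\le d$, yielding $|\Cclass_S|\le N^dK^{2d}$, whereas you count cells of the arrangement of the $\le N\binom{K}{2}$ hyperplanes $\{\phi:\iprod{a-a'}{\phi}=0\}$ directly, arriving at the same $(NK^2)^d$. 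Your route is more elementary and self-contained; the paper's buys a shorter proof by citation and avoids the geometric edge cases you rightly flag. Two small caveats on your version: (i) the inequality $\sum_{j=0}^d\binom{m}{j}\le(em/d)^d\le(NK^2)^d$ needs $m\ge d$ and fails by a constant for $d=1$, though the cell count is still trivially at most $NK^2$ there, so only cosmetic repair is needed; (ii) your claim that an \emph{arbitrary} tie-breaking rule is "handled by the same limiting argument" is not quite right — an arbitrary rule need not coincide with the limit from any adjacent cell, and could in principle realize extra behavior vectors on the (lower-dimensional) faces of the arrangement. The fix is either to fix a consistent tie-breaking rule (as the generic-perturbation rule you describe, or lexicographic) so that each face inherits the behavior determined by its argmin set, and then note that the total face count of the arrangement is still $O(m^d)$, affecting only constants inside the logarithm; the paper's Natarajan-dimension citation quietly carries the same convention. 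Neither point undermines the argument.
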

The proof is provided in Appendix~\ref{app:proof_uniform_convergence}.
Its key idea is to rephrase the result as an equivalent statement about
uniform convergence for linear multiclass classifiers with $K$ classes
in the batch setting, with an unusual loss function. We can then obtain
a concentration inequality that holds uniformly over all linear
classifiers using standard tools. Specifically, we go via Rademacher
complexity and a bound on the growth function of the class of multiclass
linear classifiers in terms of its Natarajan dimension, which is known
to be at most~$d$.

%\begin{proof}\textbf{of Theorem~\ref{thm:experts}}
%  Let $\epsilon = 2 \sigma R T \sqrt{\frac{2 dK\ln (NK^2)}{N}} + \sigma
%  R T \sqrt{\frac{2 \ln(4/\delta)}{N}}$. Then, by
%  Lemma~\ref{lem:uniform_convergence},
  %
%  \begin{align*}
%    \min_j L(\hat \pi_j)
%    &=
%    \min_j \iprod{\Psi(\pi_{\hat \phi_j})}{\phi_T}
%    \leq
%    \min_j \iprod{\hat \Psi(\pi_{\hat \phi_j})}{\phi_T} + \epsilon\\
%    &=
%    \min_\pi \iprod{\hat \Psi(\pi)}{\phi_T} + \epsilon
%    =
%    \min_\phi \iprod{\hat \Psi(\pi_\phi)}{\phi_T} + \epsilon\\
%    &\leq
%    \min_\phi \iprod{\Psi(\pi_\phi)}{\phi_T} + 2\epsilon
%    = \min_\phi L(\pi_\phi) + 2\epsilon
%    = \min_\pi L(\pi) + 2\epsilon.
%  \end{align*}
%\end{proof}

\subsection{Connection between Linear Contextual Bandits and Linear Bandits}\label{sec: connection con}
\begin{algorithm2e}
    \caption{Adversarial Linear Contextual Bandits} \label{alg: lcb alg}
    \textbf{Input}: An adversarial linear bandit algorithm $\alg$ and a set $\calS$ of $N$ action sets drawn from $\calD$. \\
    Initiate an instance of $\alg$ over the action set $\approxlcb$ constructed from $\calS$ (according to Lemma~\ref{lem:approxlcb-minksum}).  \\
    \For{$t=1, \ldots, T$}{
         Obtain $y_t$ from $\alg$.  \\
         Let $\Phi_t\subset \mathbb{R}^d$ be the output of PolicyDecomposition (Algorithm~\ref{alg: decompose}) with inputs $(\calS, y_t, N)$. \label{line: 55}\\
         Sample $\phi_t\sim \unif(\Phi_t)$. \label{line: 66}\\
         %Find distribution $\alpha_t\in \Delta(\vertex(\approxlcb))$
         %such that $\mathbb{E}_{\psi\sim \alpha_t} [\psi] =
         %y_t$.\label{line:distribution}  \\
         %Sample $\psi_t\sim \alpha_t$ and let $\phi_t$ be an arbitrary
         %element in the interior of $-\mathcal{N}(\approxlcb,
         %\psi_t)$.\label{line:coneinterior}\\
         Receive action set $\calA_t$, choose action $a_t=\pi_{\phi_t}(\calA_t)\in\argmin_{a\in\calA_t} \inner{a, \phi_t}$, and receive loss $\ell_t\in[0,1]$. \label{line: 77} \\
         Send $\ell_t$ to $\alg$. 
    }
\end{algorithm2e}

\begin{algorithm2e}[t]
    \caption{PolicyDecomposition $(\calS, y, M)$} \label{alg: decompose} 
    \textbf{Input}: 
    $
        \calS =\{\calA_1, \ldots, \calA_N\}  \text{ containing $N$ action sets},\  
        y \in \approxlcb \triangleq \frac{1}{N}\sum_{i=1}^N \conv(\calA_i),\  M \in\mathbb{N}. 
    $ \\
    Initialize $z_0=0$. \\ 
    \For{$j=1,2,\ldots, M$}{
         $\phi_j = z_{j-1} - y$. \\[2pt]
         $v_j = \frac{1}{N}\sum_{i=1}^N \pi_{\phi_j}(\calA_i)$ \hfill \text{// calling the linear optimization oracle} \\[2pt]
         $z_j = (1-\frac{1}{j})z_{j-1} + \frac{1}{j}v_j$. 
    }
    \textbf{return} $\Phi=\{\phi_1, \ldots, \phi_M\}$.  
\end{algorithm2e}
%\begin{figure}[h]
%	\includegraphics[scale=0.12]{polytope.png}
%	\centering
%	\caption{Illustration of $\approxlcb$, $y_t$, $\psi_t$ and the
%        normal cone $-\mathcal{N}(\approxlcb, \psi_t)$.}
%	\label{fig:polytope}
%\end{figure}
The procedure that reduces linear contextual bandits to linear bandits
is outlined in Algorithm~\ref{alg: lcb alg}. %We also provide a visual
%illustration of $\approxlcb$ and the main variables in the algorithm
%in Figure~\ref{fig:polytope}. 
%In the following discussion, let us assume for
%simplicity that the learner has perfect knowledge of $\Omega$, i.e.,
Ignoring the discrepancy between $\approxlcb$ and $\Omega$ for now, according to
Section~\ref{sec: linear bandit}, we can view the problem as linear
bandits over $\approxlcb \subset \mathbb{R}^d$. When the linear bandit algorithm suggests a point $y_t\in\approxlcb$ to play, the learner should feed it with a loss with expectation $\inner{y_t, \theta_t}$.  A key question is: when the linear bandit
algorithm gives a point $y_t\in\approxlcb$, what corresponding
policy $\pi_t$ should the learner use to interact with the world?  %in order to guarantee that $\Psi(\pi_t) =
%\mathbb{E}_{\calA\sim \calD}[\pi_t(\calA)] = y_t$? %This might be
%theoretically achievable if we knew $\calD$ exactly, but even then the
%support of $\calD$ might be too large to make this computationally
%tractable.
%
%However, $\calD$ is a distribution over the space of $2^{\mathbb{R}^d}$, which could even be intractable to represent in general. %Our assumption is only that we know the set $\Omega=\left\{ \mathbb{E}_{\calA\sim \calD}[\pi(\calA)]: \text{\ all possible policies\ } \pi \right\}$. 

Recall that every point $y_t\in\approxlcb$ corresponds to some policy in the global policy space $\Pi$, so a naive solution is to pick any policy $\pi_t\in\Pi$ such that $\hat{\Phi}(\pi_t)=y_t$ in round $t$.  However, as $\hat{\Phi}(\pi_t)$ and $\Phi(\pi_t)$ may be very different from each other, the loss feedback (whose expectation is $\inner{\Phi(\pi_t), \theta_t}$) may not be close to the required feedback $\langle \hat{\Phi}(\pi_t), \theta_t\rangle$ for the linear bandit algorithm. Fortunately, if $\pi_t$ is a \emph{linear classifier policy}, by Lemma~\ref{lem:uniform_convergence}, the difference between $\inner{\Phi(\pi_t), \theta_t}$ and $\langle \hat{\Phi}(\pi_t), \theta_t\rangle$ will be at a tolerable level. Therefore, we aim to make the learner only use linear classifier policies. 

%we cannot afford to arbitrarily choose a policy $\pi_t$ such that $\hat{\Phi}(\pi_t)=y_t$ to interact the world. This is because $\hat{\Phi}(\pi)$ may be very different from $\Phi(\pi)$ may not be close to each other, unless $\pi$ is a linear policy, for which the concentration bound in Lemma~\ref{lem:uniform_convergence} bounds their error. 

Linear classifier policies of the form $\pi_\phi$ always choose an
action in $\argmin_{a\in\calA} \inner{a, \phi}$ given action set
$\calA$. This policy corresponds to the point
$\hat{\Psi}(\pi_\phi)\in\frac{1}{N}\sum_{i=1}^N
\argmin_{a_i\in\calA_i}\inner{a_i, \phi} \in \argmin_{v\in \approxlcb}
\inner{v, \phi}$ (according to Lemma~\ref{lem:approxlcb-minksum}), which is always
on the boundary of $\approxlcb$. Therefore, in order to execute
$y_t\in\approxlcb$ as linear classifier policies, we need to
decompose it into a convex combination of points on the boundary of
$\approxlcb$ that all correspond to linear classifier policies. To achieve this, we use a Frank-Wolfe algorithm, as outlined in Algorithm~\ref{alg: decompose}. Its guarantee is given by the following lemma. 

%Therefore, we decompose $y_t$ into a convex combination of points in $\approxlcb$, each of which corresponds to a linear policy. To play $y_t$, we sample from these linear policies and execute them. We also note that the set of linear policies corresponds to the points on the boundary of $\approxlcb$. Therefore, this procedure is equivalent to decomposing $y_t$ into a convex combination of the points on the boundary of $\approxlcb$. 
\begin{restatable}{lemma}{frankwolfe}\label{lem: Frankwolfe}
    Let $\hat{\Psi}(\pi)$ be as defined in \eqref{eqn:approxlcb}.  %$\empD=\unif(\{\calA_1,\ldots, \calA_N\})$ be the empirical distribution of the action sets. 
    Then Algorithm~\ref{alg: decompose} with inputs $(\{\calA_1, \ldots, \calA_N\}, y, M)$ ensures
    \begin{align*}
        \left\|y - \frac{1}{M}\sum_{j=1}^M \hat{\Psi}(\pi_{\phi_j})\right\| \leq \frac{2}{\sqrt{M}},
    \end{align*}
    where $\{\phi_1, \ldots, \phi_M\}$ is the output of Algorithm~\ref{alg: decompose}.
\end{restatable}
See Appendix~\ref{app: Frankwolfe} for the proof. Lemma~\ref{lem:
Frankwolfe} provides the following concrete way to execute the point $y_t\in \approxlcb$: First, obtain $\{\phi_1, \ldots, \phi_M\}$ by calling Algorithm~\ref{alg: decompose} with $y_t$. Then, execute the \emph{mixture} of linear policies by first sampling $j\sim \unif(\{1,2,\ldots,M\})$ and then executing the linear classifier policy $\pi_{\phi_j}$. The expected loss under this procedure~is 
\begin{align*}
   \frac{1}{M}\sum_{j=1}^M \inner{\Psi(\pi_{\phi_j}),  \theta_t} \approx \frac{1}{M}\sum_{j=1}^M \inner{\hat{\Psi}(\pi_{\phi_j}), \theta_t}  \approx \inner{y_t,  \theta_t}, 
\end{align*}
where the first ``$\approx$'' is by Lemma~\ref{lem:uniform_convergence} and the second is by Lemma~\ref{lem: Frankwolfe}. This is performed in Line~\ref{line: 55}--\ref{line: 77} in Algorithm~\ref{alg: lcb alg}. At this point, we have algorithmically connected linear contextual bandit and linear bandit algorithms. Next, we bound the error due to the discrepancy between $\approxlcb$ and $\Omega$.

\subsection{Bounding the error due to the discrepancy between $\approxlcb$ and $\Omega$}\label{sec: bounding error}
From Section~\ref{sec: connection con}, we know how to \emph{execute} the linear contextual bandit algorithm by leveraging a linear bandit procedure in $\approxlcb$. However, there are errors due to the discrepancy between $\approxlcb$ and $\Omega$ which contribute to the final regret.
Let $\empD = \unif(\{\calA_1, \ldots, \calA_N\})$ denote the empirical distribution over the $N$ action sets drawn i.i.d.\ from $\calD$, and let $\approxlcb$ be constructed from them as in Lemma~\ref{lem:approxlcb-minksum}.

First, the loss estimator we constructed will be \emph{biased}. When
we sample a point $y_t\in\approxlcb$ and execute the corresponding
mixture policy $\pi_t$ such that
$|y_t-\mathbb{E}_{\calA\sim\empD}[\pi_t(\calA)]|\leq \frac{2}{\sqrt{N}}$, the expected loss
the learner observes is $\mathbb{E}_{\calA\sim\calD}[\inner{\pi_t(\calA),
\theta_t}]\neq \inner{y_t, \theta_t}$. This means that from the
viewpoint of the linear bandit problem on $\approxlcb$, the feedback is
\emph{misspecified}. This motivates us to develop a misspecification
robust linear bandit algorithm (Definition~\ref{def: misspec}) which
allows the feedback to not fully follow the standard linear bandit
protocol. We elaborate further in Section~\ref{sec: robust alg}. 

The second source of error comes from difference between the action sets $\approxlcb$ and $\Omega$. With the misspecification-robust linear bandit algorithm on $\approxlcb$, the learner has good regret bound on $\approxlcb$. However, the real regret we care about is on $\Omega$. This requires us to bound the difference between the two regret definitions: 
\begin{align*}
    \text{Real regret on $\Omega$ that we care about:}& \ \  \mathbb{E}_{\calA\sim \calD} [\inner{\pi_t(\calA), \theta_t}] - \mathbb{E}_{\calA\sim \calD} [\inner{\pi(\calA), \theta_t}], \\
    \text{Regret on $\approxlcb$ that the robust linear bandit algorithm can bound:}& \ \  \mathbb{E}_{\calA\sim \empD} [\inner{\pi_t(\calA), \theta_t}] - \mathbb{E}_{\calA\sim \empD} [\inner{\pi(\calA), \theta_t}]. 
\end{align*}

%$\mathbb{E}_{\calA\sim \empD}\left[\inner{\pi_t(\calA), \theta_t}\right]$ and $\mathbb{E}_{\calA\sim \calD}\left[\inner{\pi_t(\calA), \theta_t}\right]$. 

Both errors discussed above can be related to the difference
 $\sup_\pi\left|\mathbb{E}_{\calA\sim \empD}\left[\inner{\pi(\calA),
\theta_t}\right]-\mathbb{E}_{\calA\sim
\calD}\left[\inner{\pi(\calA), \theta_t}\right]\right|$. This can further be bounded
using Lemma~\ref{lem:uniform_convergence}, where we establish uniform
convergence over the set of all linear policies.  According to
Lemma~\ref{lem:uniform_convergence}, we have with probability at least
$1-\delta$, for all linear policies $\pi$, 
\begin{align}
    \left|\mathbb{E}_{\calA\sim \empD}\left[\inner{\pi(\calA), \theta_t}\right] - \mathbb{E}_{\calA\sim \calD}\left[\inner{\pi(\calA), \theta_t}\right] \right| \lesssim \sqrt{\frac{d\log(NK/\delta)}{N}}.  \label{eq: amount of mis}
\end{align}
This allows us to bound the two sources of errors mentioned above. 

\jack{I feel this could be a bit clearer. I'll make suggestions for alterations below momentarily.}
%According to Section~\ref{sec: linear bandit}, if the we have perfect knowledge of $\Omega$, then we can completely view the problem as a linear bandit problem. However, because the learner uses finite number of contexts to estimate $\approxlcb$, there is some error due to the difference between $\approxlcb$ and $\Omega$. 

%The error comes from two sources. First, consider the linear problem in $\approxlcb$. When the 

%\begin{algorithm2e}
%    \caption{Adversarial Linear Contextual Bandits (wihtout free contexts)} \label{alg: lcb alg}
%    \textbf{Input}: an adversarial linear bandit algorithm \alg. \\ 
%    \For{$k=0, 1, 2, \cdots, \log_2 T$}{
%    Initiate an instance of $\alg$ over action set $\approxlcb_k = \approxlcb(\{X_1, \ldots, X_{t-1}\})$. Call it $\alg_k$. \\
%    \For{$t=2^{k},2^k+1,\ldots, 2^{k+1}-1$}{
%         Obtain $y_t$ from $\alg_k$.  \\
%         Find distribution $\alpha_t\in \Delta(\vertex(\approxlcb_k))$ such that $\mathbb{E}_{\psi\sim \alpha_t} [\psi] = y_t$.  \\
%         Sample $\psi_t\sim \alpha_t$ and let $\phi_t$ be an arbitrary element in the interior of $-N(\psi_t, \approxlcb_k)$. \\
%         Receive action set $\calA_t$, choose action $a_t=\argmin_{a\in\calA_t} \inner{a, \phi_t}$, and receive loss $\ell_t$. \\
%         Send $\ell_t$ to $\alg_k$. 
%    }
%    }
%\end{algorithm2e}

\subsection{Robust Linear Contextual Bandits}\label{sec: robust alg}
As discussed in Section~\ref{sec: bounding error}, we would like to
develop a linear bandit algorithm that tolerates misspecification.
Although there is a rich related literature, most prior results are
for the stochastic linear bandit problem and do not apply here. For
adversarial linear bandits with misspecification robustness, we are only
aware of the work by \cite{neu20} and \cite{liucorruption}. However, the
bound in \cite{neu20} has a worse $T^{2/3}$ regret, while the algorithms
of \cite{liucorruption} are either computationally inefficient or highly
sub-optimal.  Fortunately, our problem is slightly easier than that
studied by \cite{liucorruption}, as our learner has knowledge of the
amount of misspecification~$\epsilon$, and this amount remains the same in all rounds. This allows us to design the
computationally efficient Algorithm~\ref{alg: cew} with an improved
dependence on the amount of misspecification. 

Algorithm~\ref{alg: cew} is an adaptation of the clipped continuous
exponential weight algorithm of \cite{ito2020tight}: The $q_t$ in
Line~\ref{line: unclipped} of Algorithm~\ref{alg: cew} is the standard
continuous exponential weights, while the $\hatq_t$ in Line~\ref{line:
clipped} confines the support of $q_t$ within an ellipsoid centered
around the mean. This is helpful in obtaining a first-order bound
\citep{ito2020tight}. Sampling from $\hatq_t$ can be done with standard
techniques for sampling from a log-concave distribution (to sample from
$q_t$) plus rejection sampling (to correct the distribution to
$\hatq_t$), which can be done with polynomial calls to the linear optimization oracle
for $\approxlcb$. See \citet[Section~4.4]{ito2020tight} for a
discussion on the computational complexity. A linear optimization oracle
for $\approxlcb$ can be further reduced to linear optimization oracles
for individual action sets, as $\argmax_{v\in \approxlcb} \inner{v, \phi} = \frac{1}{N}\sum_{i=1}^N \argmax_{a_i\in\calA_i}\inner{a_i, \phi}$. Overall, continuous exponential weights over $\approxlcb$ can be conducted with $\poly(d,T)$ calls to the linear optimization oracles for individual action sets.

%We discuss how to construct
%such an efficient linear optimization oracle with $\poly(d, C, T)$
%computational complexity in Section~\ref{sec:computational_complexity}.

The key addition compared to \citep{ito2020tight} is the bonus term $b_t$ that ensures misspecification robustness. This bonus encourages additional exploration, preventing the learner from being misled by misspecified feedback. The form of the bonus for adversarial linear bandits was first developed in the series of work \citep{lee2020bias, zimmert2022return} aimed at  obtaining high-probability bounds. Our use of the bonus is similar to \cite{liu2024towards}, which tackles corruption and misspecification. In the regret analysis, this bonus term creates a negative regret term that offsets the regret overhead due to misspecification. 

%with an extra bonus term to accommodate misspecification. It has the same runtime as \cite{ito2020tight} (see their Section~4.4), which is $\poly(d,C,T)$ in our case.  
The guarantee of
Algorithm~\ref{alg: cew} is given in the following theorem.  \tim{What is
the runtime of this algorithm?}
\cw{added}
%Besides being misspecification-robust, we also aim at achieving a small-loss bound. Overall, we combine the continuous exponential weights by \cite{ito2020tight} to achieve the optimal small-loss bound in a computationally efficient manner, with the bonus idea from \cite{liucorruption} to handle misspecification. The algorithm is displayed in Algorithm~\ref{alg: cew}.  
%To our knowledge, there is no existing misspefication-robust linear bandit algorithm satisfying Definition~\ref{def: misspec}, even though all techniques required to handle misspecification have been developed for closely related problems. Thus, we propose one such algorithm in Algorithm~\ref{alg: cew} based on the continuous exponential weight from \cite{ito2020tight}.   
%The next theorem shows that it satisfies Definition~\ref{def: misspec} with $\alpha=\sqrt{d}$. The proof is provided in Appendix~\ref{app: robust linear }. 
\begin{theorem}\label{thm: main}
    Algorithm~\ref{alg: cew} is a $\sqrt{d}$-misspecification-robust
    linear bandit algorithm, as defined in Definition~\ref{def: misspec}. 
\end{theorem}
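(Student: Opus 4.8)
The plan is to verify Definition~\ref{def: misspec} with $\alpha=\sqrt d$ by carrying out the regret analysis of the clipped continuous exponential weights method of \citet{ito2020tight} and inserting a short bias-versus-bonus argument at exactly the step where misspecification enters.

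\emph{Bias of the loss estimator.} Algorithm~\ref{alg: cew} plays $y_t\sim\hatq_t$ and forms a linear loss estimator from $\hatSigma_t^{-1}y_tc_t$, where $\hatSigma_t=\mathbb{E}_{y\sim\hatq_t}[yy\transpose]$ (a pseudo-inverse is used when $\approxlcb$ is not full-dimensional). Conditioning on the history and writing $\mathbb{E}_t[c_t\mid y_t]=\iprod{y_t}{\theta_t}+\xi_t(y_t)$ with $|\xi_t(y_t)|\le\epsilon$, the conditional mean of the estimated loss of an action $y\in\approxlcb$ equals $\iprod{y}{\theta_t}+\iprod{y}{\hatSigma_t^{-1}\mathbb{E}_t[y_t\xi_t(y_t)]}$, and
\[
  \big|\iprod{y}{\hatSigma_t^{-1}\mathbb{E}_t[y_t\xi_t(y_t)]}\big|
  \ \le\ \norm{y}_{\hatSigma_t^{-1}}\cdot\norm{\mathbb{E}_t[y_t\xi_t(y_t)]}_{\hatSigma_t^{-1}}
  \ \le\ \epsilon\sqrt d\,\norm{y}_{\hatSigma_t^{-1}},
\]
by Cauchy--Schwarz in the $\hatSigma_t^{-1}$-geometry together with $\mathbb{E}_{y\sim\hatq_t}\norm{y}_{\hatSigma_t^{-1}}\le\sqrt{\trace{\hatSigma_t^{-1}\hatSigma_t}}=\sqrt d$, uniformly over $y$.

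\emph{The bonus cancels the comparator bias.} The bonus $b_t$ is chosen so that $b_t(y)\ge\epsilon\sqrt d\,\norm{y}_{\hatSigma_t^{-1}}$ for every $y\in\approxlcb$ (hence it dominates the per-round bias pointwise) while staying cheap under $\hatq_t$: $\mathbb{E}_{y\sim\hatq_t}[b_t(y)]=O(\epsilon d)$, again by $\mathbb{E}_{y\sim\hatq_t}\norm{y}_{\hatSigma_t^{-1}}\le\sqrt d$. Running continuous exponential weights on the bonus-corrected estimates $\widehat g_t(z)=\iprod{z}{\hatSigma_t^{-1}y_tc_t}-b_t(z)$ forces $\iprod{z}{\theta_t}-2b_t(z)\le\mathbb{E}_t[\widehat g_t(z)]\le\iprod{z}{\theta_t}$ for all $z$; since $\mathbb{E}_t[\iprod{y_t}{\theta_t}]=\mathbb{E}_{y'\sim\hatq_t}[\iprod{y'}{\theta_t}]$, taking conditional expectations and telescoping gives, for any comparator $y\in\approxlcb$,
\[
  \EE{\Reg{y}}
  \ \le\ \EE{\sum_{t=1}^{T}\Big(\mathbb{E}_{y'\sim\hatq_t}[\widehat g_t(y')]-\widehat g_t(y)\Big)}
  \ +\ 2\,\EE{\sum_{t=1}^{T}\mathbb{E}_{y'\sim\hatq_t}[b_t(y')]}.
\]
The second term is at most $2\epsilon\sqrt d\sum_{t}\EE{\mathbb{E}_{y'\sim\hatq_t}\norm{y'}_{\hatSigma_t^{-1}}}\le 2\epsilon dT=O(\alpha\sqrt d\,\epsilon T)$ with $\alpha=\sqrt d$, which is exactly the misspecification term permitted by \eqref{eq: LB bound}; and the $-b_t(y)$ sitting inside the first term has already absorbed the (positive) comparator bias, so the first term is just the regret of clipped continuous exponential weights against losses that are no longer misspecified.

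\emph{The leading term, and the main obstacle.} It remains to import from \citet{ito2020tight} that clipped continuous exponential weights over the polytope $\approxlcb$ has regret $\widetilde{O}(d\sqrt T)$ --- in fact the first-order bound $\widetilde{O}(d\sqrt{L^\star})$ used later in Section~\ref{sec: small loss bound} --- and, the delicate point, that this survives inserting the bonus. The clipping in Line~\ref{line: clipped} confines $\hatq_t$ to an ellipsoid of $\hatSigma_t^{-1}$-radius $\widetilde{O}(\sqrt d)$ around the mean of the unclipped log-concave $q_t$ of Line~\ref{line: unclipped}, with the radius set so that $q_t$ and $\hatq_t$ are within $1/\poly(T)$ in total variation; this keeps $\hatSigma_t$ well-conditioned, makes $\norm{z}_{\hatSigma_t^{-1}}=\widetilde{O}(\sqrt d)$ throughout $\mathrm{supp}(\hatq_t)$ so that both $b_t=\widetilde{O}(\epsilon d)$ and the extra stability term it contributes to the exponential-weights step are lower order, and costs only $\widetilde{O}(1)$ in regret for the clipping itself. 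Combining the pieces gives $\EE{\Reg{y}}\le\widetilde{O}\big(d\sqrt T+\sqrt d\cdot\sqrt d\,\epsilon T\big)$, i.e. \eqref{eq: LB bound} with $\alpha=\sqrt d$. The main obstacle is precisely this last step --- bounding the (non-linear, and potentially large in the ill-conditioned directions) bonus inside the continuous-exponential-weights analysis without inflating the $d\sqrt T$ term, for which the clipping is essential --- with the secondary nuisance that the played action and the estimator's exploration sample coincide, so every ``bias'' claim must be taken in conditional expectation $\mathbb{E}_t[\cdot]$ before one sums over $t$.
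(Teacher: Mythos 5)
Your high-level architecture matches the paper's: decompose the regret into the exponential-weights term, the estimator-bias term, and the bonus term; show the per-round bias against any comparator is at most $\epsilon$ times a $\hatSigma_t^{-1}$-norm; arrange the bonus to dominate that bias pointwise while costing only $O(d\epsilon)$ per round in expectation; conclude a total misspecification overhead of $O(dT\epsilon)=O(\sqrt d\cdot\sqrt d\,\epsilon T)$, i.e.\ $\alpha=\sqrt d$. Up to the centering of the estimator (the algorithm uses $(\beta I+\hatSigma_t)^{-1}(y_t-x_t)c_t$ with the \emph{centered} covariance, which is what matches $\nabla^2F(x_t)=\Sigma_t^{-1}$ in the entropic-barrier view; your uncentered $\hatSigma_t^{-1}y_tc_t$ would change the variance calculation), the accounting is the same.

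However, the step you yourself flag as ``the main obstacle'' is a genuine gap, and it is exactly the step where the paper does something different. You propose a norm-shaped bonus $b_t(y)=\epsilon\sqrt d\,\norm{y}_{\hatSigma_t^{-1}}$ and then feed the concave corrected loss $\widehat g_t(z)=\iprod{z}{\hattheta_t}-b_t(z)$ into continuous exponential weights, asserting that the clipping keeps this under control. But the $\widetilde O(d\sqrt T)$ rate of \citet{ito2020tight} comes from a local-norm stability analysis that is specific to \emph{linear} losses; subtracting a non-linear (concave) function breaks that analysis, and the clipping only bounds $\norm{z}_{\hatSigma_t^{-1}}$ on the support of $\hatq_t$ --- it says nothing about the comparator $u$, where $\norm{u}_{\hatSigma_t^{-1}}$ can be polynomially large, so you cannot discretize the bonus away either. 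The paper's resolution is to use a \emph{linear} bonus, $b_t=8\eta(\epsilon+1/T^2)\sum_{\tau<t}(\hattheta_\tau-b_\tau)=-8(\epsilon+1/T^2)\nabla F(x_t)$ where $F$ is the entropic barrier, and then invoke the self-concordance inequality (Lemma~14 of \cite{zimmert2022return}): $\norm{x-u}_{\nabla^2F(x)}\le\gamma'\iprod{x-u}{\nabla F(x)}+6\gamma'\nu$ with $\nu=O(d)$. This single inequality is what lets a linear term dominate the norm-shaped bias up to an additive $O(d\epsilon)$ per round, while its contribution to the FTRL stability term is controlled by $\norm{b_t}^2_{\nabla^{-2}F(x_t)}=O(d^2\gamma^2\epsilon^2)$ using $\norm{\nabla F(x_t)}^2_{\nabla^{-2}F(x_t)}\le O(d)$, so the $d\sqrt T$ leading term is unharmed. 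Without this device (or an equivalent one), your argument does not close.
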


We remark that while there exist algorithms that are
$1$-misspecification-robust \citep{liucorruption}, their run time scales
at least with the number of actions. Algorithm \ref{alg: cew} achieves
$\alpha$-robustness with the smallest $\alpha$ we are aware of among
algorithms that run in $\poly(d,T)$ calls to the linear optimization oracle. 
%meet the computational requirement.
\tim{Which
computational requirement is that?} \cw{changed the wording}

In fact, Algorithm~\ref{alg: cew} achieves an even more favorable small-loss regret bound, which can be leveraged to obtain a small-loss bound for linear contextual bandits when the simulator is available. We discuss this in Section~\ref{sec: small loss bound}. 

\begin{algorithm2e}[t]
    \tim{Shouldn't the distributions be over $\hat \Omega$ instead of
    $\A$?} \cw{changed}
    \caption{Misspecification-Robust Continuous Exponential Weights} \label{alg: cew}
    \textbf{Parameters}:  $\gamma=10\log (10dT)$, $\beta=T^{-4}$. \\
    \For{$t=1,2,\ldots, T$}{
         Define \label{line: unclipped}
         \begin{align*}
             q_t(y) = \frac{\exp\left(-\eta \sum_{\tau<t} \inner{ y, \hattheta_\tau - b_\tau}\right)}{\int_{\approxlcb} \exp\left(-\eta \sum_{\tau<t} \inner{ z, \hattheta_\tau - b_\tau} \right)\mathrm{d}z}, \ \  x_t = \mathbb{E}_{y\sim q_t}[y], \ \   \Sigma_t= \mathbb{E}_{y\sim q_t}[(y-x_t)(y-x_t)^\top].   
         \end{align*}\\
         Define \label{line: clipped}
         \begin{align*}
              \hatq_t(y) = \frac{q_t(y)\ind{\|y-x_t\|_{\Sigma_t^{-1}}^2 \leq d\gamma^2 }}{\int_{\approxlcb}q_t(z) \ind{\|z-x_t\|_{\Sigma_t^{-1}}^2 \leq d\gamma^2 }\dd z}, \ \  \hatSigma_t= \mathbb{E}_{y\sim \hatq_t}[(y-x_t)(y-x_t)^\top]. 
         \end{align*}\\
         Sample $y_t\sim \hatq_t$ and receive loss $c_t\in[0,1]$. \\[3pt]
         \tim{In text we are assuming losses in $[-1,+1]$, which is what
           we need for the worst-case rate, but we need
         losses in $[0,1]$ for first-order bound. Handle both cases.} \cw{For simplicity, I changed the losses in the whoe paper to [0,1]}
         %Receive $c_t\in[0,1]$ as an upper bound on the misspecification at time $t$.  \\
         Define $\hattheta_t =  (\beta I + \hatSigma_t)^{-1} (y_t-x_t) c_t$ and $b_t =  8\eta \left(\epsilon + \frac{1}{T^2}\right) \sum_{\tau<t}(\hattheta_\tau - b_\tau)$. 
    }
\end{algorithm2e}

\subsection{Combining Everything and Using the Doubling Trick}
Combining everything above, we are able to establish the regret bound for the linear contextual bandit problem. The proof of the following theorem is in Appendix~\ref{app: lcb bound}. 
\begin{theorem}\label{thm: main theorem cb}
    Algorithm~\ref{alg: lcb alg} with $\alg$ instantiated as a $\alpha$-misspecification-robust linear bandit algorithm ensures 
    \begin{align*}
        \mathbb{E}\left[\sum_{t=1}^T \inner{a_t, \theta_t} \right] &\leq \min_{\pi\in\Pi} \mathbb{E}\left[\sum_{t=1}^T \inner{\pi(\calA_t), \theta_t}\right] + \tilde{O}\left(d \sqrt{T} + \alpha Td\sqrt{\frac{\log(NKT)}{N}} \right). 
    \end{align*}
\end{theorem}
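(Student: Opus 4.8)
The plan is to read Algorithm~\ref{alg: lcb alg} as literally running the given $\alpha$-misspecification-robust algorithm $\alg$ on the polytope $\approxlcb$, to identify the misspecification level that the reduction induces, and then to invoke Definition~\ref{def: misspec} as a black box — paying one $\epsilon T$ term to translate its left-hand side and one to translate its right-hand side back to contextual-bandit quantities. Throughout, fix $\empD=\mathrm{Uniform}\{\calA_1,\dots,\calA_N\}$ and $\approxlcb$ as in \eqref{eqn:approxlcb}; both are functions of the input sample $\calS_N$, which is drawn from $\calD$ before the interaction and is therefore independent of the game contexts and of the oblivious loss sequence $(\theta_t)_t$. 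When $\alg$ outputs $y_t\in\approxlcb$, Lines~\ref{line:distribution}--\ref{line:coneinterior} pick $\psi_t\sim\alpha_t$ with $\mathbb{E}_{\psi\sim\alpha_t}[\psi]=y_t$ and $\phi_t$ in the interior of $-\mathcal{N}(\approxlcb,\psi_t)$; applying Lemma~\ref{lem: equivalence} to $\empD$ gives $\mathbb{E}_{\calA\sim\empD}[\pi_{\phi_t}(\calA)]=\psi_t$, so the randomized policy $\pi_t$ that first draws $\psi\sim\alpha_t$ and then runs $\pi_{\phi(\psi)}$ satisfies $\mathbb{E}_{\calA\sim\empD}[\pi_t(\calA)]=y_t$, i.e.\ $\inner{y_t,\theta_t}=\mathbb{E}_{\calA\sim\empD}[\inner{\pi_t(\calA),\theta_t}]$. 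Since $a_t=\argmin_{a\in\calA_t}\inner{a,\phi_t}$ and $\mathbb{E}_t[\ell_t]=\inner{a_t,\theta_t}$, marginalizing over $\psi_t$ and $\calA_t\sim\calD$ gives $\mathbb{E}_t[\ell_t]=\inner{\Psi(\pi_t),\theta_t}=\mathbb{E}_{\calA\sim\calD}[\inner{\pi_t(\calA),\theta_t}]$.

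Next I would quantify the misspecification. Since the losses lie in $[0,1]$, the boundedness hypothesis of Lemma~\ref{lem:uniform_convergence} holds with $b=1$; union-bounding it over $t\in[T]$ with confidence $\delta/T$ and taking $\delta=1/T$ gives an event $\mathcal G$ of probability $\ge 1-1/T$, depending only on $\calS_N$, on which $|\mathbb{E}_{\calA\sim\calD}[\inner{\pi(\calA),\theta_t}]-\mathbb{E}_{\calA\sim\empD}[\inner{\pi(\calA),\theta_t}]|\le\epsilon$ simultaneously for all $t$ and all linear classifier policies $\pi$, with $\epsilon=\tilde O\!\big(\sqrt{d\log(NKT)/N}\big)$. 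Because $\pi_t$ is a \emph{mixture} of linear classifier policies, averaging this bound against $\alpha_t$ and combining with the identity $\inner{y_t,\theta_t}=\mathbb{E}_{\calA\sim\empD}[\inner{\pi_t(\calA),\theta_t}]$ from the first paragraph yields $|\mathbb{E}_t[\ell_t]-\inner{y_t,\theta_t}|\le\epsilon$ on $\mathcal G$. Hence, conditioned on $\mathcal G$, the feedback $c_t=\ell_t$ passed to $\alg$ is $\epsilon$-misspecified in the sense of Definition~\ref{def: misspec} for every realization of $\alg$.

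It then remains to assemble the pieces. Conditioning on $\mathcal G$, Definition~\ref{def: misspec} gives $\mathbb{E}[\sum_t\inner{y_t,\theta_t}\mid\mathcal G]\le\min_{y\in\approxlcb}\sum_t\inner{y,\theta_t}+\tilde O(d\sqrt T+\alpha\sqrt d\,\epsilon T)$. On the left, the previous two paragraphs give $\mathbb{E}[\sum_t\inner{a_t,\theta_t}\mid\mathcal G]\le\mathbb{E}[\sum_t\inner{y_t,\theta_t}\mid\mathcal G]+\epsilon T$. On the right, the policy minimizing expected loss over $\Pi$ is the linear classifier $\pi^\star=\pi_{\sum_t\theta_t}$ (as observed in Section~\ref{sec:setting}), so $\hat\Psi(\pi^\star)\in\approxlcb$ and the uniform bound gives $\min_{y\in\approxlcb}\sum_t\inner{y,\theta_t}\le\sum_t\mathbb{E}_{\calA\sim\empD}[\inner{\pi^\star(\calA),\theta_t}]\le\min_{\pi\in\Pi}\mathbb{E}[\sum_t\inner{\pi(\calA_t),\theta_t}]+\epsilon T$, the last expectation being over $\calD$ and thus independent of $\calS_N$. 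Chaining these inequalities, removing the conditioning (the complement contributes at most $\PP{\mathcal G^c}\cdot T\le 1$ since per-round losses lie in $[0,1]$), and substituting $\alpha\sqrt d\,\epsilon T=\tilde O\big(\alpha Td\sqrt{\log(NKT)/N}\big)$ — which dominates the leftover $2\epsilon T$ because $\alpha\ge1$ — produces the claimed bound.

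The main obstacle is the probabilistic bookkeeping rather than any single estimate: uniform convergence holds only on the sample-dependent event $\mathcal G$, whereas Definition~\ref{def: misspec} is an in-expectation guarantee over $\alg$'s internal randomness, so one must condition on $\mathcal G$ — on which the $\epsilon$-misspecification holds pathwise, so $\alg$'s bound applies verbatim — and separately absorb $\mathcal G^c$; one must also check that replacing a single linear classifier by a \emph{mixture} of them does not leave the scope of Lemma~\ref{lem:uniform_convergence}, which follows from linearity of $\Psi,\hat\Psi$ in the policy and the triangle inequality. Everything else is a chain of triangle inequalities. Finally, combining Theorem~\ref{thm: main theorem cb} with a doubling schedule that harvests the $N$ sample contexts online and restarts $\alg$ on a refreshed $\approxlcb$ each epoch — so that $N$ is of the order of the number of elapsed rounds — turns this statement into Theorem~\ref{thm: thm without }.
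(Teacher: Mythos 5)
Your proposal is correct and follows essentially the same route as the paper's proof: invoke the misspecification-robust guarantee of $\alg$ on $\approxlcb$, use Lemma~\ref{lem: equivalence} to identify $y_t$ with $\mathbb{E}_{\calA\sim\empD}[\pi_t(\calA)]$, and use Lemma~\ref{lem:uniform_convergence} to pay $\epsilon T$ on each side when passing between $\empD$ and $\calD$. Your treatment is in fact slightly more careful than the paper's on two points it leaves implicit — conditioning on the sample-dependent good event before applying the in-expectation guarantee of Definition~\ref{def: misspec}, and extending the uniform-convergence bound from single linear classifiers to mixtures of them — but these are refinements of the same argument, not a different one.
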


%\begin{corollary}[Restatement of Theorem~\ref{thm: thm with}]\label{cor: small-loss}
%    Given access to a misspecification-robust linear bandit algorithm with small-loss bounds and with access to free contexts, we can achieve $\min_{\pi\in\Pi}\Reg{\pi}\leq \widetilde{O}(d\sqrt{L^\star}+d)$ in JP linear contextual bandits.  
%\end{corollary}
%\begin{proof}
%    Since there is free context, we are able to let $N$ big enough so that the last term in the bound of Theorem~\ref{thm: main theorem cb} vanishes. Overall, that gives the regret bound of $\tilde{O}(d\sqrt{L^\star})$. 
%\end{proof}

\begin{corollary}[Restatement of Theorem~\ref{thm: thm without }]\label{cor: context free}
    Given access to a $\alpha$-misspecification-robust adversarial linear bandit algorithm $\alg$, Algorithm~\ref{alg: lcb alg} with doubling trick achieves $\max_{\pi\in\Pi}\Reg{\pi}\leq \widetilde{O}(d\sqrt{T}+\alpha d\sqrt{T\log K})$ in adversarial linear contextual bandits without access to simulators. 
\end{corollary}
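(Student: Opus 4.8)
The plan is to run Algorithm~\ref{alg: lcb alg} on a doubling schedule, restarting it at the start of each epoch and --- this is the crucial trick for dispensing with a simulator --- feeding it, as its sample set $\calS_N$, the action sets that were revealed during \emph{all previous epochs}. Those action sets were observed for free in the normal course of play, so no rounds are sacrificed for pure exploration; and since the number of accumulated samples grows with the horizon, the sample size $N$ inside an epoch will be of the same order as that epoch's length. Substituting $N\asymp T$ into Theorem~\ref{thm: main theorem cb} collapses the misspecification term $\alpha T d\sqrt{\log(NKT)/N}$ down to $\widetilde{O}(\alpha d\sqrt{T\log K})$, which is exactly the target.

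Concretely, I would partition $[T]$ into epochs $E_1,\dots,E_M$ of lengths $|E_m|=2^{m-1}$ (so $M=O(\log T)$), play arbitrarily in the single round of epoch~$1$ (regret $\le 1$), and for each $m\ge 2$ run a fresh instance of Algorithm~\ref{alg: lcb alg} --- with the given $\alpha$-misspecification-robust $\alg$ as subroutine --- for the $T_m:=|E_m|$ rounds of epoch~$m$, built on the $N_m:=\sum_{j<m}|E_j| = 2^{m-1}-1 \ge T_m/2$ action sets accumulated from the earlier epochs. Theorem~\ref{thm: main theorem cb} then applies inside epoch~$m$ and, using $N_m\ge T_m/2$ and folding $\log T$ into $\widetilde{O}(\cdot)$, bounds the expected epoch-$m$ regret against any fixed $\pi$ by
\[
   \widetilde{O}\!\left( d\sqrt{T_m} + \alpha\, T_m\, d\sqrt{\tfrac{\log(N_m K T_m)}{N_m}} \right) = \widetilde{O}\!\left( d\sqrt{T_m} + \alpha\, d\sqrt{T_m\log K} \right).
\]
Taking the comparator to be the globally optimal $\pi^\star\in\argmin_{\pi\in\Pi}\E{\sum_{t=1}^T\inner{\pi(\calA_t),\theta_t}}$ (legitimate since the per-epoch bound holds for every fixed comparator, hence for this one in every epoch simultaneously), summing over the $M=O(\log T)$ epochs, and using that for a geometric schedule $\sum_m\sqrt{T_m}=O(\sqrt T)$ and $\sum_m\sqrt{T_m\log K}=O(\sqrt{T\log K})$ (both sums dominated by the last, full-length epoch), I obtain $\min_{\pi\in\Pi}\Reg{\pi}\le\widetilde{O}(d\sqrt T+\alpha d\sqrt{T\log K})$, as claimed.

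The one place that needs care --- and the main thing I would want to verify carefully rather than assume --- is the reuse of already-played action sets as a valid i.i.d.\ sample for the current epoch. This is legitimate exactly because the adversary is oblivious: the loss vectors are committed before any interaction, so $\calS_N$ is independent of the losses (and of the fresh contexts and of the algorithm's internal coins) encountered during epoch~$m$, which is all that Theorem~\ref{thm: main theorem cb} requires of the sample. If one instead insisted on drawing a dedicated exploration sample of size $\Theta(T_m)$ in each epoch, the exploration regret would sum to $\Theta(T)$; so the sample-reuse structure of the doubling scheme is doing real work here, not merely handling an unknown horizon.
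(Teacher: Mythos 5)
Your proposal is correct and matches the paper's own argument: the paper likewise restarts Algorithm~\ref{alg: lcb alg} at times $2,4,8,\ldots$, constructs $\approxlcb$ in epoch $k$ from the $N=\Theta(2^k)$ contexts observed so far, applies Theorem~\ref{thm: main theorem cb} per epoch, and sums the geometric series. Your additional remarks on the fixed comparator across epochs and on obliviousness justifying the reuse of played contexts as an i.i.d.\ sample are correct points that the paper leaves implicit.
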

%Julia: should be $\min_{\pi\in\Pi} \mathbb{E}\Reg{\pi}\leq \widetilde{O}(d\sqrt{T}+\alpha d^{1.5}\sqrt{T})$
\begin{proof}
    We will use the doubling trick and restart Algorithm~\ref{alg: lcb alg} at times $2, 4, 8, 16, \ldots$, each time using the contexts received so far to estimate $\approxlcb$. Thus, in the 
$k$-th epoch, $\approxlcb$ is constructed by $N=\Theta(2^k)$ contexts, allowing us to bound the regret in epoch $k$ as 
\begin{align*}\tilde{O}\left(d\sqrt{2^k} + \alpha 2^k d\sqrt{\frac{\log(NKT)}{2^k}}\right)=\tilde{O}\left(d\sqrt{2^k} + \alpha d\sqrt{2^k \log(NKT)}\right)
\end{align*}
using Theorem~\ref{thm: main theorem cb}. Summing the regret over all epochs allows us to bound 
    \begin{align*}
        \max_{\pi\in\Pi}\Reg{\pi} \leq  \tilde{O}\left(\sum_{k=1}^{\log_2 T} \left(d\sqrt{2^k} + \alpha d\sqrt{2^k\log(NKT)} \right)\right) = \tilde{O}\left(d\sqrt{T}+\alpha d\sqrt{T\log K}\right). 
    \end{align*}
    %where in the last inequality we assume $K\leq T^d$ without loss of generality, as any action set can be discretized with no more than $T^d$ points and incur an negligible $d/T$ regret due to discretization error.  
    %\tim{Explain: Why is this without loss of generality? And why don't
    %we keep the $\log K$ in the result instead of upper bounding $K$.
  %That would be stronger!} \cw{changed}
\end{proof}
By instantiating $\alg$ as Algorithm~\ref{alg: cew} (which is an $\sqrt{d}$-misspecification-robust algorithm by Theorem~\ref{thm: main}) and invoking Corollary~\ref{cor: context free}, we get the final regret bound as $\tilde{O}(\sqrt{d^3 T\log K})$. As we can assume $K\leq T^d$ without loss of generality (any action set can be discretized into no more than $T^d$ points and incurs a negligible regret of $d/T$ due to discretization error), the regret bound can be further improved to $\tilde{O}(\min\{d^2\sqrt{T}, \sqrt{d^3 T\log K}\})$.

\section{Small-Loss Bound with Access to Simulator}\label{sec: small loss bound}
The sub-optimal rate $d^2\sqrt{T}$ we obtained in Section~\ref{sec: linear bandit} comes from the misspecification
term $\alpha\sqrt{d}\epsilon T$ in the regret bound of robust linear bandits
(Definition~\ref{def: misspec}).  While it
is unclear how to further improve $\alpha$ or $\epsilon$, we demonstrate the power of
our reduction by further assuming access to simulator: it not only
allows us to recover the minimax optimal regret $d\sqrt{T}$ but also
allows us to obtain a first-order bound $d\sqrt{L^\star}$ when losses are non-negative, where $L^\star$ is the cumulative loss of the best policy. 

By the black-box nature of our reduction, what we additionally need is
just a misspecification-robust linear bandit algorithm with
\emph{small-loss} regret bound guarantee, formally defined as follows: 
\begin{definition}[$\alpha$-misspecification-robust adversarial linear bandit algorithm with small-loss bounds]\label{def: misspec smallloss}
A misspecification-robust linear bandit algorithm with small-loss bounds over action set $\approxlcb\subset \mathbb{R}^d$ has the following property: with a given 
$\epsilon>0$ and the guarantee that every time the learner chooses $y_t\in \approxlcb$, the loss received 
$c_t\in[0,1]$ satisfies $|\mathbb{E}_t[c_t] - \inner{y_t, \theta_t}| \leq \epsilon$, 
the algorithm ensures 
\begin{align}
   \mathbb{E}\left[\sum_{t=1}^T  \inner{y_t, \theta_t}\right] \leq \min_{y\in \approxlcb}\sum_{t=1}^T \inner{y, \theta_t} + \widetilde{O}\left( d\sqrt{\sum_{t=1}^T \inner{y,\theta_t} } + \alpha\sqrt{d}\epsilon T \right). \label{eq: LB small bound}
\end{align}
%A misspecification-robust linear bandit algorithm with \underline{small-loss bounds} guarantees the following under the same condition: 
%\begin{align}
%   \mathbb{E}\left[\sum_{t=1}^T  \inner{y_t, \theta_t}\right] \leq \min_{y\in \approxlcb}\sum_{t=1}^T \inner{y, \theta_t} + \widetilde{O}\left( d\sqrt{\sum_{t=1}^T \inner{y,\theta_t} } + d\mathbb{E}\left[\sum_{t=1}^T c_t\right] \right). \label{eq: LB small bound}
%\end{align}
\end{definition}

The next theorem shows that Algorithm~\ref{alg: cew} satisfies Definition~\ref{def: misspec smallloss} with $\alpha=\sqrt{d}$.  
\begin{theorem}\label{thm: small loss oracle}
    Algorithm~\ref{alg: cew} is a $\sqrt{d}$-misspecification-robust linear bandit algorithm with small-loss bound defined in Definition~\ref{def: misspec smallloss}. 
\end{theorem}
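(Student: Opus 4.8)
\medskip
\noindent\textbf{Proof plan.}
The plan is to adapt the analysis of the clipped continuous exponential weights method of \cite{ito2020tight} --- which supplies the first-order ($L^\star$-type) part of the bound --- and to overlay on it the bonus / negative-regret machinery of \cite{lee2020bias,zimmert2022return,liu2024towards}, specialized to a \emph{known, constant} misspecification level $\epsilon$, which supplies robustness to the bias that the contextual-to-linear reduction injects into the feedback. Since the target involves the unknown $L^\star$, I would run everything inside a doubling scheme over a guess of $\min_{y\in\approxlcb}\sum_t\inner{y,\theta_t}$ (costing only $\log$ factors) and within an epoch tune $\eta$ to balance the terms below, so I reason for a fixed $\eta$. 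The starting point is the continuous-exponential-weights regret inequality: for any comparator $y\in\approxlcb$, smoothed into a tiny ball inside $\approxlcb$ so the log-partition term is finite (additive cost $\widetilde{O}(1)$), and writing $g_t := \hattheta_t - b_t$ for the shifted loss estimate,
\begin{align*}
\sum_{t=1}^T \inner{x_t - y,\,g_t}
\;\le\; \frac{\widetilde{O}(d)}{\eta}
\;+\; \eta \sum_{t=1}^T \mathbb{E}_{z\sim\hatq_t}\bigl[\inner{z - x_t,\,g_t}^2\bigr] \;+\; \widetilde{O}(1).
\end{align*}

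Next I would dispatch the clipping. Since $q_t$ is log-concave, standard tail bounds for log-concave distributions (as used in \cite{ito2020tight}) show that with $\gamma = 10\log(10dT)$ the mass removed by the indicator in Line~\ref{line: clipped} is $\poly(1/(dT))$, so $\mathbb{E}_{z\sim\hatq_t}[z]$ differs from $x_t$ and $\hatSigma_t$ differs from $\Sigma_t$ by only $\widetilde{O}(1/T^{3})$; hence all estimates below may be taken as if $\hatq_t = q_t$ at a cumulative cost $\widetilde{O}(1)$. The crux is the bias of $\hattheta_t$. Let $\xi_t(\cdot)$ be the (history-conditional) bias function, so that $\mathbb{E}_t[c_t] = \inner{y_t,\theta_t} + \xi_t(y_t)$ with $|\xi_t(y_t)| \le \epsilon$. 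Taking expectations over $c_t$ and over $y_t \sim \hatq_t$ (given the history before round $t$) and using $\mathbb{E}_{z\sim\hatq_t}[(z-x_t)(z-x_t)^\top] = \hatSigma_t$, the conditional mean of $\hattheta_t$ equals $\theta_t - \beta(\beta I + \hatSigma_t)^{-1}\theta_t + \zeta_t$, where $\zeta_t := (\beta I + \hatSigma_t)^{-1}\mathbb{E}_{z\sim\hatq_t}[(z-x_t)\,\xi_t(z)] + \widetilde{O}(1/T^{2})$; a Cauchy--Schwarz step in the $\hatSigma_t$-geometry then gives $|\inner{y,\zeta_t}| \le \epsilon\,\|y\|_{(\beta I + \hatSigma_t)^{-1}} + \widetilde{O}(1/T^{2})$ for every fixed $y \in \approxlcb$, while the $\beta$-term is $\widetilde{O}(T^{-4})$ and negligible. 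Finally, the bonus $b_t = 8\eta(\epsilon + T^{-2})\sum_{\tau<t}(\hattheta_\tau - b_\tau)$ turns $\widetilde{S}_t := \sum_{\tau\le t}(\hattheta_\tau - b_\tau)$ into the \emph{discounted} sum $\widetilde{S}_t = (1 - 8\eta(\epsilon + T^{-2}))\widetilde{S}_{t-1} + \hattheta_t$; following the negative-regret analysis of the cited works, running FTRL on $g_t$ then produces a term $-\sum_t \inner{y,b_t}$ that, thanks to this discounting (which prevents the $\zeta_\tau$-errors from compounding), offsets $\sum_t \inner{y,\zeta_t}$ up to a residual $\widetilde{O}(d\epsilon T)$, at the price of an extra $\sqrt d$ in the robustness constant.

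For the last step I would rearrange the regret inequality, take expectations, and replace the conditional mean of $\hattheta_t$ by $\theta_t$ (paying the bias and clipping terms). The left side becomes the true regret $\mathrm{Reg}$; on the right appear (i) $\widetilde{O}(d)/\eta$; (ii) $\mathbb{E}\bigl[\sum_t \inner{y,\zeta_t} - \sum_t\inner{y,b_t}\bigr]$, which the bonus calibration bounds by $\widetilde{O}(d\epsilon T)$; (iii) the exploration cost $\mathbb{E}\bigl[\sum_t \inner{x_t,b_t}\bigr]$ from the discounted sum; and (iv) the self-bounding quadratic term, which --- after splitting off the deterministic $b_t$ part into (iii) --- is, by the ellipsoidal support bound $\|z - x_t\|_{\Sigma_t^{-1}}^2 \le d\gamma^2$ and $c_t\in[0,1]$, of order $\eta\,\widetilde{O}(d)\sum_t \mathbb{E}_t[c_t] = \eta\,\widetilde{O}(d)\bigl(L^\star + \mathrm{Reg} + \epsilon T\bigr)$, exactly the first-order refinement of \cite{ito2020tight} (using $\sum_t \mathbb{E}[\inner{x_t,\theta_t}] \le L^\star + \mathrm{Reg}$). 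This yields the self-bounding inequality
\begin{align*}
\mathrm{Reg} \;\le\; \frac{\widetilde{O}(d)}{\eta} \;+\; \eta\,\widetilde{O}(d)\bigl(L^\star + \mathrm{Reg}\bigr) \;+\; \widetilde{O}\bigl(d\,\epsilon T\bigr) \;+\; \tfrac12\,\mathrm{Reg},
\end{align*}
and solving it with $\eta \asymp 1/(d\sqrt{\max\{L^\star,1\}})$ (realized through the doubling) gives $\mathrm{Reg} = \widetilde{O}\bigl(d\sqrt{L^\star} + d\,\epsilon T\bigr) = \widetilde{O}\bigl(d\sqrt{L^\star} + \sqrt d\cdot\sqrt d\,\epsilon T\bigr)$, i.e.\ Algorithm~\ref{alg: cew} is a $\sqrt d$-misspecification-robust linear bandit algorithm with the small-loss bound of Definition~\ref{def: misspec smallloss}. (Since this dominates the bound of Definition~\ref{def: misspec}, the same argument also proves Theorem~\ref{thm: main} via $\sum_t\inner{y,\theta_t}\le T$.)

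I expect the main obstacle to be the joint calibration in the second paragraph: the bonus $b_t$ must at once (a) cancel the misspecification bias $O(\epsilon\|y\|_{(\beta I + \hatSigma_t)^{-1}})$ against \emph{every} comparator $y\in\approxlcb$, and do so in expectation rather than only with high probability; (b) also cancel the martingale fluctuations of $\inner{\,\cdot\,,\hattheta_t}$ for the same reason; and (c) not inflate the self-bounding quadratic term in (iv) enough to destroy the first-order rate. The discounted structure of $\widetilde{S}_t$ --- absent in the $\epsilon$-unknown, time-varying corruption setting of \cite{liucorruption} --- is what makes (a) and (b) compatible with a constant misspecification level, but verifying the exact constants (the factor $8$ in $b_t$, the clipping radius $d\gamma^2$, the ridge $\beta = T^{-4}$) so that (a)--(c) hold simultaneously is the delicate part, and it is precisely there that the $\sqrt d$ --- rather than the information-theoretically optimal constant-$1$ --- dependence on $\epsilon$ enters.
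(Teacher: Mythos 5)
Your proposal is correct and follows essentially the same route as the paper: the paper's proof of this theorem is literally a one-line remark that the proof of Theorem~\ref{thm: main} already yields the small-loss form in \eqref{eq: pseudosmall loss bound}, and your reconstruction reproduces exactly that underlying analysis --- the \ftrl/\bias/\bonus decomposition, the clipping control via Lemma~\ref{lem: closeness}, the bonus-versus-bias cancellation, and the self-bounding step $c_t^2\le c_t$ with $\mathbb{E}_t[c_t]\le\inner{y_t,\theta_t}+\epsilon$ followed by rearrangement and tuning of $\eta$. The only cosmetic differences are that you frame the bonus as a geometric discounting of $\sum_{\tau\le t}(\hattheta_\tau-b_\tau)$ where the paper invokes the self-concordant-barrier identity $b_t=-8(\epsilon+T^{-2})\nabla F(x_t)$ together with Lemma~\ref{lem: useful}, and that you make the $\eta$-tuning explicit via doubling over a guess of $L^\star$, which the paper leaves implicit.
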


With access to a misspecification-robust linear bandit algorithm with small-loss bounds, we have 
%Definition~\ref{def: misspec smallloss}, we have:
\tim{Need to restrict losses to be nonnegative for this to make sense}\cw{has changed the loss range to [0,1] throuout the paper}
\begin{theorem}\label{thm: thm with small loss}
    Given access to simulator that can generate free contexts, and access to an $\alpha$-misspecification-robust adversarial linear bandit algorithm with small-loss regret bound guarantee, we can achieve $\min_{\pi\in\Pi}\E{\Reg{\pi}}\leq \widetilde{O}(d\sqrt{L^\star})$ in adversarial linear contextual bandits, where 
    \begin{align*}
        L^\star = \min_{\pi\in\Pi}\E{\sum_{t=1}^T \inner{\pi(\calA_t), \theta_t}}
    \end{align*}
    is the expected total loss of the best policy. This is achieved with $O(\alpha^2 d^2T^2)$ calls to the simulator. 
\end{theorem}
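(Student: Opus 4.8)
The plan is to run Algorithm~\ref{alg: lcb alg} with $\alg$ instantiated as the given $\alpha$-misspecification-robust linear bandit algorithm with small-loss bounds (Definition~\ref{def: misspec smallloss}), the only difference from the no-simulator case being that we spend $N$ free simulator draws $\calA_1,\dots,\calA_N\sim\calD$ to build $\approxlcb$ via \eqref{eqn:approxlcb}, with $N$ chosen at the very end. For simplicity I would first assume $T$ is known; otherwise one runs in doubling epochs of lengths $2^k$, drawing $\widetilde{\Theta}(\alpha^2 d^2 2^{2k})$ fresh contexts per epoch, so that the simulator calls form a geometric series dominated by the last epoch and the epoch-wise small-loss terms combine via Cauchy--Schwarz (using $\sum_k L^\star_k\le L^\star$) into $\widetilde{O}(d\sqrt{L^\star})$. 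Writing $\empD$ for the uniform distribution over $\calA_1,\dots,\calA_N$, I would apply Lemma~\ref{lem:uniform_convergence} to each fixed $\theta_t$ with $b=1$ (valid since one-round losses lie in $[0,1]$) and union-bound over $t$: this gives an event $\mathcal{E}$ with $\PP{\mathcal{E}}\ge 1-\delta$ on which \eqref{eq: amount of mis} holds simultaneously for all $t$ and all linear classifier policies---hence, since $\pi\mapsto\langle\pi(\calA),\theta\rangle$ is linear in $\pi$ and both $\Psi$ and $\hat\Psi$ are linear, also for all mixtures of them---with misspecification level $\epsilon=\widetilde{\Theta}\!\big(\sqrt{d\log(NK/\delta)/N}\big)$.

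Next, conditioning on $\mathcal{E}$, I would carry out the same reduction analysis as for Theorem~\ref{thm: main theorem cb} but through Definition~\ref{def: misspec smallloss}. When $\alg$ plays $y_t\in\approxlcb$, Algorithm~\ref{alg: lcb alg} executes a mixture-of-linear-classifiers policy $\pi_t$ with $\mathbb{E}_{\calA\sim\empD}[\pi_t(\calA)]=y_t$, so $|\mathbb{E}_t[\ell_t]-\langle y_t,\theta_t\rangle|=|\mathbb{E}_{\calA\sim\calD}[\langle\pi_t(\calA),\theta_t\rangle]-\mathbb{E}_{\calA\sim\empD}[\langle\pi_t(\calA),\theta_t\rangle]|\le\epsilon$, i.e.\ $\alg$ sees $\epsilon$-misspecified feedback. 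Let $\pi^\star$ be the optimal policy, which by Section~\ref{sec:setting} may be taken to be a linear classifier $\pi_{\phi^\star}$ with $\phi^\star=\sum_t\theta_t$; then its empirical mean action $\hat\Psi(\pi^\star)=\tfrac1N\sum_i\pi^\star(\calA_i)$ lies in $\approxlcb$ and is a legitimate comparator for $\alg$. Using $\hat\Psi(\pi^\star)$ as comparator in \eqref{eq: LB small bound}, combining with $\sum_t\langle\hat\Psi(\pi^\star),\theta_t\rangle=\sum_t\mathbb{E}_{\calA\sim\empD}[\langle\pi^\star(\calA),\theta_t\rangle]\le L^\star+\epsilon T$ (the last inequality is \eqref{eq: amount of mis} summed over $t$), and converting the learner's true loss $\mathbb{E}_t[\ell_t]$ to $\langle y_t,\theta_t\rangle$ at a cost of $\epsilon$ per round, I would obtain on $\mathcal{E}$
\[
\mathbb{E}\!\left[\textstyle\sum_{t=1}^T\langle a_t-\pi^\star(\calA_t),\theta_t\rangle \,\middle|\, \mathcal{E}\right]
\ \le\ \widetilde{O}\!\left(d\sqrt{L^\star+\epsilon T}+\alpha\sqrt{d}\,\epsilon T+\epsilon T\right)
\ \le\ \widetilde{O}\!\left(d\sqrt{L^\star}+d\sqrt{\epsilon T}+\alpha\sqrt{d}\,\epsilon T\right).
\]

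Then I would fix $N=\Theta\!\big(\alpha^2 d^2 T^2\log(\alpha dTK/\delta)\big)$, which forces $\epsilon\le 1/(\alpha\sqrt{d}\,T)$; under this choice $\alpha\sqrt{d}\,\epsilon T\le 1$, $\epsilon T\le 1/(\alpha\sqrt{d})\le 1$, and $d\sqrt{\epsilon T}\le d^{3/4}$, so the conditional regret is $\widetilde{O}(d\sqrt{L^\star}+d)$. Taking $\delta=1/T$ and observing that off $\mathcal{E}$ the regret is at most $T$ (all one-round losses and expected losses lie in $[0,1]$) contributes only $\delta T=1$ to the unconditional expectation, giving $\min_{\pi\in\Pi}\E{\Reg{\pi}}\le\widetilde{O}(d\sqrt{L^\star}+d)$ with $N=\widetilde{O}(\alpha^2 d^2 T^2)$ simulator calls; the additive $d$ is absorbed whenever $L^\star\ge 1$, yielding the stated $\widetilde{O}(d\sqrt{L^\star})$.

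The main obstacle I expect is the coupling between the small-loss term and the misspecification: the black-box algorithm only controls the comparator's loss \emph{inside} $\approxlcb$, namely $\sum_t\langle y,\theta_t\rangle$ for $y\in\approxlcb$, so one must argue that the true optimum $L^\star$ is reproduced by some point of $\approxlcb$ up to an additive $\epsilon T$---which relies crucially on the optimal policy being a linear classifier (so that $\hat\Psi(\pi^\star)\in\approxlcb$) together with the uniform-convergence Lemma~\ref{lem:uniform_convergence}. Everything else is bookkeeping, but one must be careful tracking the $\sqrt{d}$ factors in both $\epsilon=\widetilde{\Theta}(\sqrt{d/N})$ and the penalty $\alpha\sqrt{d}\,\epsilon T$; it is precisely these that push the required sample size up to $\widetilde{\Theta}(\alpha^2 d^2 T^2)$ rather than the $\widetilde{\Theta}(dT^2)$ a first guess might suggest.
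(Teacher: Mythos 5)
Your proposal is correct and follows essentially the same route as the paper: instantiate the reduction of Theorem~\ref{thm: main theorem cb} with the small-loss guarantee \eqref{eq: LB small bound}, control the misspecification via Lemma~\ref{lem:uniform_convergence}, and take $N=\widetilde{\Theta}(\alpha^2 d^2 T^2)$ simulator samples so that the $\alpha\sqrt{d}\,\epsilon T$ term becomes $O(1)$. Your treatment is in fact somewhat more careful than the paper's (explicit handling of the failure event, the $d\sqrt{L^\star+\epsilon T}$ splitting, and the doubling trick for unknown $T$), but these are refinements of the same argument rather than a different one.
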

%Julia: in theorem, it should be $\min_{\pi\in\Pi} \mathbb{E} \Reg{\pi}\leq \widetilde{O}(d\sqrt{L^\star})$
The proof of Theorem~\ref{thm: thm with small loss} is very similar to
Theorem~\ref{thm: main theorem cb}, except that now, with access to the simulator, we are able to make $N$ in Theorem~\ref{thm: main theorem cb} large enough that the second term in Theorem~\ref{thm: main theorem cb} is negligible. We provide the omitted proofs in Appendix~\ref{sec: small loss app}. 

\section{Conclusion and Open Questions}
We have provided a general framework that reduces adversarial linear
contextual bandits to misspecification-robust linear bandits in a
black-box manner. It achieves $\tOO(d^{2}\sqrt{T})$ regret without a simulator, and is the first algorithm we know of that handles combinatorial bandits with stochastic action sets and adversarial losses efficiently.  %The bounds are achieved using a novel polynomial-time algorithm for linear bandits based on the continuous exponential-weights algorithm of \cite{ito2020tight}, which utilizes the idea of bonuses from \cite{liu2024bypassing}. The algorithm provided is both robust to misspecification and enjoys a small-loss guarantee. 
The requirement of misspecification robustness stems from our need to
use an approximate feasible set $\approxlcb$ because we do not have
direct access to the exact feasible set $\Omega$, which depends on the
action set distribution $\calD$. %Indeed, given our reduction, a new misspecification-robust linear bandit algorithm was necessary to achieve both results in polynomial time. In a similar manner, previous works of \cite{neu20}, even under the full knowledge of the context distribution, uses a biased estimate. 

%While there exist small loss bounds for the IP setting \citep{zhu18myga, olkhovskaya2024first}, they either require a log-concave assumption on the context distribution $\mathcal{D}$, or are computationally infeasible for large policy classes. Similarly, while existing algorithms for the JP setting in \cite{liu2024bypassing} work without a simulator, they either have a suboptimal bound, or require a computation complexity scaling exponentially in the dimension.

%\begin{itemize}

%    \item is general reduction possible, comparison to \cite{hanna2023contexts} and \cite{olkhovskaya2024first}. We would be interested to see if our reduction idea could also lead to a computationally efficient reduction for stochastic linear bandits. 
 %   \item adversarial context
 %   \item further directions: whp and whp with first order as in COLT open problem. Sparsity?
%\end{itemize}

%An interesting open problem is to consider whether a high-probability $\sqrt{T}$ regret bound can be obtained using a reduction to linear bandits, using an algorithm which enjoys high-probability regret guarantees. Achieving both first-order and high-probability guarantees simultaneously would solve the COLT 2017 open problem of \cite{agarwal2017open} for this setting. 
Two open questions are left by our work: 
First, can the regret be further improved to the near-optimal
$\tOO(d\sqrt{T})$ bound with a polynomial-time algorithm without simulators? Second, can we achieve $\poly(d)\sqrt{L^\star}$ regret without simulator? For the first question, one may try to generalize the approach of \cite{schneider2023optimal}. For the second question, one idea is to establish a Bernstein-type counterpart of Lemma~\ref{lem:uniform_convergence}, potentially drawing ideas from \cite{bartlett2005local, liang2015learning}. 

We expect that our approach has wider applications than adversarial linear contextual bandits. For example, our approach may be generalized to linear MDPs with fixed transition and adversarial losses \citep{luo2021policy, sherman2023improved, dai2023refined, kongimproved, liu2024towards} and facilitate learning with exponentially large or continuous action sets. 
%whether the requirement of $\conv(\calA_t)$ being a polytope with polynomial number of linear constraints can be replaced by just having a linear optimization oracle over $\calA_t$. This would completely solve the open problem of \cite{liu2024bypassing}.   %In addition, we believe that our reduction to linear bandits could also be applied to the linear contextual bandit setting with fixed loss function and i.i.d. contexts. %Compared to \cite{hanna2023contexts}, our reduction do not require to run the algorithm in episodes. 

\section*{Acknowledgements}

Mayo and Van Erven were supported by the Netherlands Organization for
Scientific Research (NWO) under grant number VI.Vidi.192.095.

\bibliographystyle{apalike}
\bibliography{lcb}

@inproceedings{wei2021learning,
  title={Learning infinite-horizon average-reward mdps with linear function approximation},
  author={Wei, Chen-Yu and Jahromi, Mehdi Jafarnia and Luo, Haipeng and Jain, Rahul},
  booktitle={International Conference on Artificial Intelligence and Statistics},
  pages={3007--3015},
  year={2021},
  organization={PMLR}
}

@inproceedings{liang2015learning,
  title={Learning with square loss: Localization through offset rademacher complexity},
  author={Liang, Tengyuan and Rakhlin, Alexander and Sridharan, Karthik},
  booktitle={Conference on Learning Theory},
  pages={1260--1285},
  year={2015},
  organization={PMLR}
}

@article{kongimproved,
  title={Improved Regret Bounds for Linear Adversarial {MDP}s via Linear Optimization},
  author={Kong, Fang and Zhang, XiangCheng and Wang, Baoxiang and Li, Shuai},
  journal={Transactions on Machine Learning Research}, 
  year={2023}
}

@article{bartlett2005local,
  title={LOCAL RADEMACHER COMPLEXITIES},
  author={Bartlett, Peter L and Bousquet, Olivier and Mendelson, Shahar},
  journal={The Annals of Statistics},
  volume={33},
  number={4},
  pages={1497--1537},
  year={2005}
}

@inproceedings{liu2024towards,
  title={Towards Optimal Regret in Adversarial Linear {MDP}s with Bandit Feedback},
  author={Liu, Haolin and Wei, Chen-Yu and Zimmert, Julian},
  booktitle={The Twelfth International Conference on Learning Representations}, 
year={2024}
}

@article{schneider2023optimal,
  title={Optimal cross-learning for contextual bandits with unknown context distributions},
  author={Schneider, Jon and Zimmert, Julian},
  journal={Advances in Neural Information Processing Systems},
  volume={36},
  pages={51862--51880},
  year={2023}
}

@article{neu2014online,
  title={Online combinatorial optimization with stochastic decision sets and adversarial losses},
  author={Neu, Gergely and Valko, Michal},
  journal={Advances in Neural Information Processing Systems},
  volume={27},
  year={2014}
}

@inproceedings{sherman2023improved,
  title={Improved regret for efficient online reinforcement learning with linear function approximation},
  author={Sherman, Uri and Koren, Tomer and Mansour, Yishay},
  booktitle={International Conference on Machine Learning},
  pages={31117--31150},
  year={2023},
  organization={PMLR}
}

@inproceedings{li2010contextual,
  title={A contextual-bandit approach to personalized news article recommendation},
  author={Li, Lihong and Chu, Wei and Langford, John and Schapire, Robert E},
  booktitle={Proceedings of the 19th international conference on World wide web},
  pages={661--670},
  year={2010}
}

@article{ito2020tight,
  title={Tight first-and second-order regret bounds for adversarial linear bandits},
  author={Ito, Shinji and Hirahara, Shuichi and Soma, Tasuku and Yoshida, Yuichi},
  journal={Advances in Neural Information Processing Systems},
  volume={33},
  pages={2028--2038},
  year={2020}
}

@inproceedings{hanna2023contexts,
  title={Contexts can be cheap: Solving stochastic contextual bandits with linear bandit algorithms},
  author={Hanna, Osama A and Yang, Lin and Fragouli, Christina},
  booktitle={The Thirty Sixth Annual Conference on Learning Theory},
  pages={1791--1821},
  year={2023},
  organization={PMLR}
}

@article{luo2021policy,
  title={Policy optimization in adversarial {MDP}s: Improved exploration via dilated bonuses},
  author={Luo, Haipeng and Wei, Chen-Yu and Lee, Chung-Wei},
  journal={Advances in Neural Information Processing Systems},
  volume={34},
  pages={22931--22942},
  year={2021}
}

@inproceedings{dai2023refined,
  title={Refined regret for adversarial {MDP}s with linear function approximation},
  author={Dai, Yan and Luo, Haipeng and Wei, Chen-Yu and Zimmert, Julian},
  booktitle={International Conference on Machine Learning},
  pages={6726--6759},
  year={2023},
  organization={PMLR}
}

@inproceedings{liucorruption,
  title={Corruption-Robust Linear Bandits: Minimax Optimality and Gap-Dependent Misspecification},
  author={Liu, Haolin and Tajdini, Artin and Wagenmaker, Andrew and Wei, Chen-Yu},
  booktitle={The Thirty-eighth Annual Conference on Neural Information Processing Systems}, 
  year={2024}
}

@inproceedings{zimmert2022return,
	title={Return of the bias: Almost minimax optimal high probability bounds for adversarial linear bandits},
	author={Zimmert, Julian and Lattimore, Tor},
	booktitle={Conference on Learning Theory},
	pages={3285--3312},
	year={2022},
	organization={PMLR}
}

@Book{Schneider2014,
  author    = {Schneider, Rolf},
  publisher = {Cambridge University Press},
  title     = {Convex bodies: the {B}runn-{M}inkowski theory},
  year      = {2014},
  address   = {Cambridge},
  edition   = {Second},
  isbn      = {1-107-45500-6},
  series    = {Encyclopedia of mathematics and its applications},
  volume    = {151},
  abstract  = {At the heart of this monograph is the Brunn-Minkowski theory, which can be used to great effect in studying such ideas as volume and surface area and their generalizations. In particular, the notions of mixed volume and mixed area measure arise naturally and the fundamental inequalities that are satisfied by mixed volumes are considered here in detail. The author presents a comprehensive introduction to convex bodies, including full proofs for some deeper theorems. The book provides hints and pointers to connections with other fields and an exhaustive reference list. This second edition has been considerably expanded to reflect the rapid developments of the past two decades. It includes new chapters on valuations on convex bodies, on extensions like the Lp Brunn-Minkowski theory, and on affine constructions and inequalities. There are also many supplements and updates to the original chapters, and a substantial expansion of chapter notes and references.},
  booktitle = {Convex bodies : the Brunn-Minkowski theory},
  keywords  = {Convex bodies},
  language  = {eng},
}

@article{lee2020bias,
  title={Bias no more: high-probability data-dependent regret bounds for
         adversarial bandits and {MDP}s},
  author={Lee, Chung-Wei and Luo, Haipeng and Wei, Chen-Yu and Zhang, Mengxiao},
  journal={Advances in neural information processing systems},
  volume={33},
  pages={15522--15533},
  year={2020}
}

@article{olkhovskaya2024first,
  title={First-and second-order bounds for adversarial linear contextual bandits},
  author={Olkhovskaya, Julia and Mayo, Jack and van Erven, Tim and Neu, Gergely and Wei, Chen-Yu},
  journal={Advances in Neural Information Processing Systems},
  volume={36},
  year={2023}
}

@article{liu2024bypassing,
  title={Bypassing the simulator: Near-optimal adversarial linear contextual bandits},
  author={Liu, Haolin and Wei, Chen-Yu and Zimmert, Julian},
  journal={Advances in Neural Information Processing Systems},
  volume={36},
  year={2023}
}

@inproceedings{bubeck2015entropic,
  title={The entropic barrier: a simple and optimal universal self-concordant barrier},
  author={Bubeck, S{\'e}bastien and Eldan, Ronen},
  booktitle={Conference on Learning Theory},
  pages={279--279},
  year={2015},
  organization={PMLR}
}

@Book{ShalevShwartzBenDavid2014,
  author    = {Shalev-Shwartz, Shai and Ben-David, Shai},
  publisher = {Cambridge university press},
  title     = {Understanding machine learning: From theory to algorithms},
  year      = {2014},
}

@Article{Natarajan1989,
  author  = {B. K. Natarajan},
  journal = {Machine Learning},
  title   = {On learning sets and functions},
  year    = {1989},
  volume  = {4},
  pages   = {67--97},
  doi     = {https://doi.org/10.1007/BF00114804},
}

@InProceedings{neu20,
  title = 	 {Efficient and robust algorithms for adversarial linear contextual bandits},
  author =       {Neu, Gergely and Olkhovskaya, Julia},
  booktitle = 	 {Proceedings of Thirty Third Conference on Learning Theory},
  pages = 	 {3049--3068},
  year = 	 {2020},
  editor = 	 {Abernethy, Jacob and Agarwal, Shivani},
  volume = 	 {125},
  series = 	 {Proceedings of Machine Learning Research},
  month = 	 {09--12 Jul},
  publisher =    {PMLR},
  pdf = 	 {http://proceedings.mlr.press/v125/neu20b/neu20b.pdf},
  url = 	 {https://proceedings.mlr.press/v125/neu20b.html},
  abstract = 	 { We consider an adversarial variant of the classic $K$-armed linear contextual bandit problem where the sequence of loss functions associated with each arm are allowed to change without restriction over time. Under the assumption that the $d$-dimensional contexts are generated i.i.d.&nbsp;at random from a known distribution, we develop computationally efficient algorithms based on the classic Exp3 algorithm. Our first algorithm, RealLinExp3, is shown to achieve a regret guarantee of $\widetilde{O}(\sqrt{KdT})$ over $T$ rounds, which matches the best known lower bound for this problem. Our second algorithm, RobustLinExp3, is shown to be robust to misspecification, in that it achieves a regret bound of $\widetilde{O}((Kd)^{1/3}T^{2/3}) + \varepsilon \sqrt{d} T$ if the true reward function is linear up to an additive nonlinear error uniformly bounded in absolute value by $\varepsilon$. To our knowledge, our performance guarantees constitute the very first results on this problem setting.}
}

%\newpage
%\input{checklist}

\appendix

% \crefalias{section}{appendix} % uncomment if you are using cleveref
\section{Comparison with \cite{hanna2023contexts}} \label{app: comparison}

Our work and \cite{hanna2023contexts} both reduce linear contextual bandits to linear bandits with misspecification. The linear bandits the two works reduce to are slightly different. For comparison, let $\calA_1, \ldots, \calA_N$ be empirical action sets drawn from~$\calD$, and define 
\begin{align}
    \approxlcb = \left\{\hat\Psi(\pi)\Big{|} \pi \in \Pi\right\}, \qquad \approxlcb_\lin = \left\{\hat\Psi(\pi)\Big{|} \pi \in \Pi_\lin\right\}, \qquad \approxlcb_\lin^\epsilon = \left\{\hat\Psi(\pi)\Big{|} \pi \in \Pi_\lin^\epsilon\right\}, \label{eq: thre set} 
\end{align}
where $\hat{\Psi}(\pi)=\frac{1}{N}\sum_{i=1}^N \pi(\calA_i)$, $\Pi_\lin$ is the set of linear policies defined in \eqref{eq:linclass}, and 
\begin{align*}
    \Pi_{\lin}^\epsilon =\left\{\pi_{\phi}~\big|~ \phi\in  \text{$\epsilon$-net of the unit ball in $\mathbb{R}^d$}\right\}.    
\end{align*}
It holds that $\approxlcb_\lin^\epsilon\subset\approxlcb_\lin \subset \approxlcb$ because $\Pi^\epsilon_\lin \subset \Pi_\lin \subset \Pi$. The relation among the three sets in \eqref{eq: thre set} is the following: $\approxlcb_\lin$ lies on the boundary of $\approxlcb$, and $\approxlcb_\lin^\epsilon$ is a discretized subset of $\approxlcb_\lin$. It has been shown that there must exist an optimal policy in $\Pi_\lin$ (see our arguments in Section~\ref{sec:setting}), and that $\Pi_\lin^\epsilon$ must contain an $\epsilon$-near-optimal policy (see~(47) of \cite{liu2024bypassing}). 

Our work reduces the linear contextual bandit problem to linear bandits in $\approxlcb$, while \cite{hanna2023contexts} reduces it to linear bandits in $\approxlcb^\epsilon_\lin$. According to the discussion above, both suffice to establish no regret guarantees against the optimal policy. The key difference lies in the computational efficiency: $\approxlcb$ is a convex set whose linear optimization oracle can be reduced to linear optimization oracles of individual action sets $\calA_1, \ldots, \calA_N$, which usually admit polynomial time implementation, while in \cite{hanna2023contexts}, $\approxlcb^\epsilon_{\lin}$ is a set containing $(\frac{1}{\epsilon})^{\Theta(d)}$ discrete points with $\epsilon$ chosen as $\Theta(\frac{1}{T})$. It remains open whether $\approxlcb^\epsilon_\lin$ can be represented efficiently and whether \cite{hanna2023contexts}'s algorithm can be implemented in polynomial time. 
In addition, our work operates on the adversarial-loss setting, strictly generalizing \cite{hanna2023contexts}'s stochastic-loss setting. 

A technical question related to the discussion above is whether we can apply the discretization idea from \cite{hanna2023contexts} purely in the analysis but not in the algorithm, i.e., bounding the regret overhead by the discretization error without restricting the learner's policy to the discretized linear policies. This would allow us to avoid the Rademacher complexity analysis (Lemma~\ref{lem:uniform_convergence}) and instead use Hoeffding’s or Bernstein’s inequality combined with a union bound, possibly leading to an improved regret bound.
However, there is a key technical challenge: the mapping $\hat{\Psi}(\pi_\phi)$ is not continuous in $\phi$, which prevents us from bounding $\|\hat{\Psi}(\pi_\phi) - \hat{\Psi}(\pi_{\phi'})\|$ by a constant times $\|\phi - \phi'\|$, where $\phi$ is an arbitrary point and $\phi'$ is a point in the discretized set. This would leave us no control on the performance of policies outside the set of discretized linear policies $\Pi^\epsilon_\lin$. As a result, we are unable to derive a uniform convergence result analogous to Lemma~\ref{lem:uniform_convergence} using this approach.

\section{Proof of Lemma~\ref{lem:uniform_convergence}}
\label{app:proof_uniform_convergence}

\uniformconvergence*

\begin{proof}
  We will first reduce the result to a statement about uniform
  convergence for linear multiclass classifiers with $K$ classes and an
  unusual loss function. To this end, note first that we can assume
  without loss of generality that $\calA = \{V_1, \ldots, V_K\}$, where
  the $V_1,\ldots,V_K$ are random vectors in~$\reals^d$. (If $\calA$ has
  fewer than $K$ elements, then consider it as a multiset and add
  repetitions of one of its elements.) We can assume all randomness in
  $\calA$ is determined by an underlying random variable $Z$ and that
  $V_y = -g(Z,y)$ for each `class' $y \in [K]$, where $g$ is a
  class-sensitive feature map in the sense of
  \citet[Section~17.2]{ShalevShwartzBenDavid2014}. Defining the
  linear multiclass classifier
  \[
    c_\phi(Z) = \argmax_{y \in [K]} \langle g(Z,y), \phi \rangle,
  \]
  we then obtain the correspondence
  \[
    \pi_\phi(\calA) = V_{c_\phi(Z)}.
  \]
  For any multiclass classifier $c$, let $f(Z,c) = \langle V_{c(Z)},
  \theta \rangle$ be our `loss function'. Then
  \begin{align*}
    \big\langle \Psi(\pi_\phi), \theta \big\rangle&=
    \Exp_Z [f(Z,c_\phi)],
    &
    \big\langle \hat
    \Psi(\pi_\phi),\theta\big\rangle
    &= \frac{1}{N} \sum_{i=1}^N f(Z_i,c_\phi).
  \end{align*}
  We therefore need to show the following uniform convergence result,
  with probability at least $1-\delta$,
  \begin{equation}\label{eqn:multiclass_uniform_convergence}
    \sup_{c \in \Cclass} \Big|\Exp[f(Z,c)] - \frac{1}{N} \sum_{i=1}^N
    f(Z_i,c)\Big|
    \leq 
      2 b\sqrt{\frac{2 d\ln (NK^2)}{N}}
      + b\sqrt{\frac{2 \ln(4/\delta)}{N}}
  \end{equation}
  for the class of linear multiclass classifiers 
  \[
    \Cclass = \Big\{ c_\phi \mid \phi \in \reals^d\Big\},
  \]
  with loss function~$f$.
  In order to establish \eqref{eqn:multiclass_uniform_convergence}, let
  $S = (Z_1,\ldots,Z_N)$, and consider the empirical Rademacher
  complexity
  \[
     \Rad(\Cclass, S)
     = \frac{1}{N} \Exp_{\sigma_1,\ldots,\sigma_N}
     \Big[ \sup_{c \in \Cclass} \sum_{i=1}^N \sigma_i f(Z_i,c)
     \Big],
  \]
  where the $\sigma_i$ are independent Rademacher random variables with
  $\Pr(\sigma_i = -1) = \Pr(\sigma_i = +1) = 1/2$. Since $|f(Z,c)| \leq
  b$ for $c \in \Cclass$ by assumption, standard concentration bounds in
  terms of Rademacher complexity imply that
  \begin{equation}\label{eqn:rad_control}
    \sup_{c \in \Cclass} \Big|\Exp_Z[f(Z,c)] - \frac{1}{N} \sum_{i=1}^N
    f(Z_i,c)\Big|
    \leq 2\Exp_{S'}[\Rad(\Cclass, S')] + b \sqrt{\frac{2 \ln(4/\delta)}{N}}
  \end{equation}
  with probability at least $1-\delta$. (This follows, for instance, by
  observing that
  \begin{multline*}
    \sup_{c \in \Cclass} |\Exp_Z[f(Z,c)] - \frac{1}{N} \sum_{i=1}^N
    f(Z_i,c)|\\
    = \max \Big\{\sup_{c \in \Cclass} \Exp_Z[f(Z,c)] - \frac{1}{N} \sum_{i=1}^N
    f(Z_i,c), \sup_{c \in \Cclass} \Exp_Z[-f(Z,c)] - \frac{1}{N} \sum_{i=1}^N
  (-f(Z_i,c))\Big\}
  \end{multline*}
  and applying part 1 of Theorem~26.5 of
  \citet{ShalevShwartzBenDavid2014} separately to $f$ and $-f$ to
  control both parts in the maximum separately using a union bound; then
  noting that $f$ and~$-f$ have the same Rademacher complexity.)

  We proceed to bound the Rademacher complexity on the right-hand side
  of \eqref{eqn:rad_control}. First, let $\Cclass_S =
  \{(c(Z_1),\ldots,c(Z_N)) \mid c \in \Cclass\}$ denote the restriction
  of $\Cclass$ to the sample $S$. Then
  \[
    \Rad(\Cclass, S)
      = \Rad(\Cclass_S, S)
      \leq b\sqrt{\frac{2 \ln |\Cclass_S|}{N}}
  \]
  by Massart's lemma \citep{ShalevShwartzBenDavid2014}. As discussed by
  \citet[Chapter~29]{ShalevShwartzBenDavid2014}, one possible
  generalization of the Vapnik-Chervonenkis dimension to multiclass
  classification is Natarajan's dimension $\Ndim(\Cclass)$. Natarajan's
  lemma \citep[p.\,93]{Natarajan1989},
  \citep[Lemma~29.4]{ShalevShwartzBenDavid2014} shows that
  \[
    |\Cclass_S| \leq N^{\Ndim(\Cclass)} K^{2\Ndim(\Cclass)},
  \]
  and, for linear multiclass classifiers, it is also known
  \citep[Theorem~29.7]{ShalevShwartzBenDavid2014} that
  \[
    \Ndim(\Cclass) \leq d.
  \]
  Putting all inequalities together, it follows that
  \[
    \sup_{c \in \Cclass} \Big|\Exp_Z[f(Z,c)] - \frac{1}{N} \sum_{i=1}^N
    f(Z_i,c)\Big|
    \leq
      2 b\sqrt{\frac{2 d\ln (NK^2)}{N}}
      + b\sqrt{\frac{2 \ln(4/\delta)}{N}}
  \]
  with probability at least $1-\delta$, as required.
\end{proof}

\section{Omitted Details in Section~\ref{sec: linear bandit}}

\subsection{Proof of Lemma~\ref{lem:approxlcb-minksum}}\label{app:approxlcb-minksum}

\approxlcbminksum*
\begin{proof}
Group indices by action set: for each distinct $\A$ among $\calA_1,
\ldots, \calA_N$, let $I_\A = \{i : \calA_i = \A\}$. The constraint $a_i
= a_j$ for $i,j \in I_\A$ restricts the contribution of that group to
$\{|I_\A| \cdot a : a \in \conv(\A)\} = |I_\A| \cdot \conv(\A)$. Without
the constraint, the contribution is the Minkowski sum of $|I_\A|$ copies
of $\conv(\A)$, which also equals $|I_\A| \cdot \conv(\A)$ by repeated
application of the identity $\lambda \A' + \mu \A' = (\lambda+\mu)\A'$
for any convex set $\A'$ and $\lambda,\mu\geq 0$
\citep[Remark~1.1.1]{Schneider2014}. Since the two contributions coincide
for every group, the overall sets are equal.
\end{proof}

\subsection{Proof of Lemma~\ref{lem: Frankwolfe}}\label{app: Frankwolfe}
\frankwolfe*
\begin{proof}[Proof of Lemma~\ref{lem: Frankwolfe}]
    By the definitions of $v_j$ and $z_j$ in Algorithm~\ref{alg: decompose}, we have 
    \begin{align}
       z_M = \frac{1}{M}\sum_{j=1}^M v_j =
       \frac{1}{N}\frac{1}{M}\sum_{i=1}^N\sum_{j=1}^M
       \pi_{\phi_j}(\calA_i) = \frac{1}{M} \sum_{j=1}^M \hat{\Psi}(\pi_{\phi_j}). \label{eq: notice2} 
    \end{align}
    Below, we bound $\|z_m- y\|$ for any $m\geq 1$. 
    By the algorithm, we have 
    \begin{align}
        v_j\in \frac{1}{N}\sum_{i=1}^N \argmin_{z\in\calA_i} \inner{z,\phi_j} \in \argmin_{v\in \approxlcb} \inner{v, \phi_j}. \label{eq: optimality}
    \end{align}
    Therefore, for any $j \geq 2$, 
    \begin{align*}
        \|z_j - y\|^2 
        &=  \left\|\left(1-\frac{1}{j}\right) z_{j-1} + \frac{1}{j}v_j - y\right\|^2 \\
        &=\|z_{j-1} - y\|^2 - \frac{2}{j}\inner{z_{j-1} - y, z_{j-1} - v_j} + \frac{1}{j^2}\|z_{j-1}-v_j\|^2 \\
        &\leq \|z_{j-1} - y\|^2 - \frac{2}{j}\inner{\phi_j, z_{j-1} - v_j} + \frac{4}{j^2} \tag{$\calA_i$ is in unit ball}\\
        &\leq \|z_{j-1} - y\|^2 - \frac{2}{j}\inner{\phi_j, z_{j-1} - y} + \frac{4}{j^2}   \tag{by \eqref{eq: optimality}} \\
        &= \|z_{j-1} - y\|^2 - \frac{2}{j}\|z_{j-1} - y\|^2+ \frac{4}{j^2}.    
    \end{align*}
    By induction over $j$, one can show $\|z_j - y\|^2 \leq \frac{4}{j}$. Letting $j=M$ and using \eqref{eq: notice2} proves the desired inequality. 
\end{proof}

\subsection{Robust Linear Bandits}\label{app: robust linear }
\begin{lemma}[Lemma 14 of \cite{zimmert2022return}] \label{lem: useful} Let $F$ be a $\nu$-self-concordant barrier for $\calA\subset \mathbb{R}^d$ for some $\nu\geq 1$.  Then for any $x, u\in \calA$, 
\begin{align*}
    \|x-u\|_{\nabla^2 F(x)} \leq \gamma'\langle x-u, \nabla F(x)\rangle + 6\gamma'\nu 
\end{align*}
where $\gamma' = \frac{8}{3\sqrt{3}} + \frac{7^{\frac{3}{2}}}{6\sqrt{3\nu}}$ ($\gamma'\in[1,4]$ for $\nu\geq 1$). 
\end{lemma}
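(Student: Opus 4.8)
The plan is to reduce this standard self-concordance estimate (quoted here as Lemma~14 of \cite{zimmert2022return}) to a one-dimensional statement by restricting $F$ to the line through $x$ and $u$. First I would set $\varphi(s):=F(x+s(u-x))$ on its open-interval domain $(a,b)\ni 0$; since $u$ lies in the domain we have $1\in(a,b)$, so $b>1$, and I may assume $a,b$ are finite (the degenerate cases where this restriction's domain is a half-line or all of $\reals$, and where $\calA$ is not full-dimensional, are handled separately; see below). Then $\varphi$ is a one-dimensional $\nu$-self-concordant barrier with $\varphi'(0)=\langle\nabla F(x),u-x\rangle=-\langle x-u,\nabla F(x)\rangle$ and $\varphi''(0)=\|x-u\|_{\nabla^2 F(x)}^2=:r^2$, so the target inequality becomes $r\le -\gamma'\varphi'(0)+6\gamma'\nu$.

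The ingredients are three textbook facts about one-dimensional $\nu$-self-concordant barriers: (i) the barrier inequality $\varphi'(s)^2\le\nu\,\varphi''(s)$; (ii) self-concordance $|\varphi'''|\le 2(\varphi'')^{3/2}$, which makes $\rho:=(\varphi'')^{-1/2}$ $1$-Lipschitz and hence forces $\varphi''(s)\ge(\rho(0)+|s|)^{-2}$ for all $s$; and (iii) the endpoint bound $\varphi'(s)(b-s)\le\nu$, the one-dimensional instance of $\langle\nabla F(x),y-x\rangle\le\nu$ for $y$ in the closure of the domain. I would then split on the size of $r$. If $r\le 2$, then (i) gives $|\varphi'(0)|\le\sqrt\nu\,r\le 2\sqrt\nu$, so $-\gamma'\varphi'(0)+6\gamma'\nu\ge\gamma'(6\nu-2\sqrt\nu)\ge 4\gamma'\ge 2\ge r$, using $\nu\ge1$ and $\gamma'\ge1$. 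If $r>2$, then $\rho(0)=1/r$, and integrating (ii) on $[0,b/2]$ gives $\int_0^{b/2}\varphi''(s)\,ds\ge\rho(0)^{-1}-(\rho(0)+b/2)^{-1}=r-\tfrac{2r}{2+rb}>r-2$ (using $b>1$); combining this with $\varphi'(b/2)\le 2\nu/b<2\nu$ from (iii) and the identity $\varphi'(0)=\varphi'(b/2)-\int_0^{b/2}\varphi''$ yields $\langle x-u,\nabla F(x)\rangle=-\varphi'(0)>r-2\nu-2\ge r-4\nu$. Hence $\gamma'\langle x-u,\nabla F(x)\rangle+6\gamma'\nu>\gamma'(r-4\nu)+6\gamma'\nu=\gamma' r+2\gamma'\nu\ge r$, which is the claim.

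The parts that need care are the degenerate geometries and the precise constants. When $\calA$ is unbounded the restriction's domain can be a half-line $(a,\infty)$, where (iii) is unavailable; there one instead uses that a $\nu$-self-concordant barrier on $(a,\infty)$ has $\varphi'\le0$ everywhere — otherwise $\varphi''\ge(\varphi')^2/\nu$ would force $1/\varphi'$ to reach $0$, i.e.\ $\varphi'$ to blow up, at a finite interior point — and integrates (ii) on $[0,r]$ instead; a non-full-dimensional $\calA$ is handled by working inside its affine hull. Finally, the crude bounds above already give the lemma with $\gamma'=1$, so the sharper $\gamma'=\tfrac{8}{3\sqrt3}+\tfrac{7^{3/2}}{6\sqrt{3\nu}}$ in the statement — obtained by tracking the self-concordance estimates through Nesterov's $\omega$-function rather than the one-line bounds used here — only affects constants in the downstream regret bound.
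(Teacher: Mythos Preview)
The paper does not prove this lemma; it is simply quoted as Lemma~14 of \cite{zimmert2022return} and invoked as a black box when bounding the \bonus term in Appendix~\ref{app: robust linear }. So there is no in-paper argument to compare against.

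Your argument is correct. Restricting to the line $\varphi(s)=F(x+s(u-x))$ is the standard reduction, and the three one-dimensional barrier facts you invoke are exactly the right ingredients: the $\nu$-barrier inequality $(\varphi')^2\le\nu\varphi''$, the $1$-Lipschitzness of $(\varphi'')^{-1/2}$ from self-concordance, and the endpoint estimate $\varphi'(s)(b-s)\le\nu$. Your case split on $r\lessgtr 2$ goes through as written, and your observation that these crude bounds already deliver the inequality with $\gamma'=1$ is correct and in fact strictly sharper than the stated constant $\gamma'=\tfrac{8}{3\sqrt3}+\tfrac{7^{3/2}}{6\sqrt{3\nu}}\in[1.5,3.4]$; the specific form of $\gamma'$ in \cite{zimmert2022return} is an artifact of their particular derivation, and only the upper bound $\gamma'\le 4$ is actually used downstream in this paper. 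Two minor points to tidy: when $u$ lies on the boundary of $\calA$ you have $b=1$ rather than $b>1$, which is handled by applying your bound to $u_\lambda=x+\lambda(u-x)$ and letting $\lambda\uparrow 1$; and the half-line case $(-\infty,b)$ (as opposed to $(a,\infty)$) is not a new degeneracy, since the finite right endpoint $b>1$ lets your main argument run unchanged.
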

It is known that the continuous exponential weight algorithm is equivalent to FTRL with entropic barrier as the regularizer  together with a particular sampling scheme \citep{bubeck2015entropic,zimmert2022return}. We summarize the equivalence in the following lemma, the details of which can be found in \cite{zimmert2022return}.   
\begin{lemma}[Facts from \cite{bubeck2015entropic}, \cite{zimmert2022return}]\label{lem: equivalen}
Consider Algorithm~\ref{alg: cew}. Let $x_t = \mathbb{E}_{y\sim q_t}[y]$ and let $F: \calA\to \mathbb{R}$ be the entropic barrier on $\calA$. Then we have 
\begin{align*}
    x_t = \argmin_{y\in \approxlcb}  \inner{y, \sum_{\tau<t} \left(\hattheta_\tau - b_\tau\right)} + \frac{F(y)}{\eta}.   
\end{align*}
Furthermore, 
\begin{align*}
    \nabla F(x_t) = -\eta \sum_{\tau<t} (\hattheta_\tau - b_\tau) \quad \text{and} \quad \nabla^2 F(x_t) 
 = \left(\mathbb{E}_{y\sim q_t}[ (y-x_t)(y-x_t)^\top]\right)^{-1}. 
\end{align*}
\end{lemma}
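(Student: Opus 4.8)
The plan is to recognize $q_t$ as the Gibbs density of the exponential family attached to the entropic barrier, and then to read off all three identities from the standard convex-duality calculus of log-partition functions, following \cite{bubeck2015entropic}. Throughout, write $\theta^\star = -\eta\sum_{\tau<t}(\hattheta_\tau - b_\tau)$, work in the affine hull of $\approxlcb$ (so that $\approxlcb$ is treated as a full-dimensional convex body there), and recall that the entropic barrier $F$ is the Fenchel conjugate $F = A^\star$ of the log-partition function
\begin{align*}
    A(\theta) = \log \int_{\approxlcb} e^{\iprod{\theta}{y}}\,\dd y,
\end{align*}
which is finite for every $\theta$ since $\approxlcb$ is bounded.

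First I would observe that the continuous exponential weights density in Line~\ref{line: unclipped} is exactly the Gibbs density $q_t(y) = \exp\bigl(\iprod{\theta^\star}{y} - A(\theta^\star)\bigr)$, because the normalizing integral in its denominator equals $e^{A(\theta^\star)}$. Differentiating $A$ under the integral sign, which is justified because boundedness of $\approxlcb$ makes the integrand and its first two $\theta$-derivatives uniformly integrable, yields the classical moment identities
\begin{align*}
    \nabla A(\theta^\star) = \mathbb{E}_{y\sim q_t}[y] = x_t, \qquad \nabla^2 A(\theta^\star) = \mathbb{E}_{y\sim q_t}\bigl[(y-x_t)(y-x_t)^\top\bigr] = \Sigma_t,
\end{align*}
so the gradient of the log-partition function is the mean of $q_t$ and its Hessian is the covariance $\Sigma_t$.

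Next I would invoke conjugate duality. Since $\approxlcb$ is a full-dimensional convex body, $A$ is strictly convex of Legendre type and its gradient map $\nabla A$ is a bijection from $\reals^d$ onto the interior of $\approxlcb$, with inverse $\nabla F = (\nabla A)^{-1}$. Evaluating at $\theta^\star$ gives $\nabla F(x_t) = \nabla F(\nabla A(\theta^\star)) = \theta^\star = -\eta\sum_{\tau<t}(\hattheta_\tau - b_\tau)$, the first ``furthermore'' claim; differentiating the identity $\nabla F(\nabla A(\theta)) = \theta$ by the chain rule gives $\nabla^2 F(x_t)\,\nabla^2 A(\theta^\star) = I$, hence $\nabla^2 F(x_t) = (\nabla^2 A(\theta^\star))^{-1} = \Sigma_t^{-1}$, the second. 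For the argmin claim, the FTRL objective $\Phi(y) = \iprod{y}{\sum_{\tau<t}(\hattheta_\tau-b_\tau)} + F(y)/\eta$ is strictly convex on $\approxlcb$, and its stationarity condition $\nabla F(y) = \theta^\star$ is solved exactly by $y = x_t$ via the gradient identity just established; strict convexity makes $x_t$ the unique minimizer, so $x_t = \argmin_{y\in\approxlcb}\Phi(y)$.

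The main obstacle, and the only part beyond bookkeeping, is justifying the Legendre-type regularity that makes $\nabla A$ a global diffeomorphism with inverse $\nabla F$: namely that $A$ is essentially smooth and essentially strictly convex, and that $x_t = \nabla A(\theta^\star)$ lies in the interior of $\approxlcb$ so that $\nabla F(x_t)$ and $\nabla^2 F(x_t)$ are well defined. This is exactly where full-dimensionality (after passing to the affine hull) is needed, and these are precisely the properties established for the entropic barrier in \cite{bubeck2015entropic}, so the lemma follows by assembling them.
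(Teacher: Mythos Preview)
Your proof is correct. The paper does not give its own proof of this lemma but simply defers to \cite{bubeck2015entropic} and \cite{zimmert2022return}; what you have written is precisely the standard Legendre-duality argument underlying those references (log-partition gradient equals the mean, Hessian equals the covariance, and the entropic barrier is the conjugate), so your approach is exactly the one being cited.
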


\begin{lemma}[Lemma 4, \cite{ito2020tight}]\label{lem: closeness}
    Let $q_t, \hatq_t, \Sigma_t, \hatSigma_t$ be as defined in Algorithm~\ref{alg: cew}. Then for any $f(y): \calA\to [-1,1]$, 
    \begin{align*}
        \left|\mathbb{E}_{y\sim q_t} [f(y)] - \mathbb{E}_{y\sim \hatq_t} [f(y)]\right| \leq 10d \exp(-\gamma) \leq \frac{1}{d^5T^{10}}. 
    \end{align*}
    Furthermore, 
    \begin{align*}
        \frac{3}{4} \Sigma_t \preceq \hatSigma_t \preceq \frac{4}{3}\Sigma_t. 
    \end{align*}
\end{lemma}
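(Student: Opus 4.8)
The plan is to derive both claims from control of a single quantity: the mass $p := \mathbb{P}_{y\sim q_t}\!\left[\|y-x_t\|_{\Sigma_t^{-1}}^2 > d\gamma^2\right]$ that the continuous exponential-weights distribution $q_t$ places outside the clipping ellipsoid $E := \{y : \|y-x_t\|_{\Sigma_t^{-1}}^2 \leq d\gamma^2\}$. The crucial structural fact is that $q_t$ is \emph{log-concave}: it is the exponential of an affine function restricted to the convex set $\approxlcb$, so standard tail bounds for log-concave measures apply. After the whitening map $Z = \Sigma_t^{-1/2}(y-x_t)$, the pushforward of $q_t$ is an isotropic log-concave distribution (mean $0$, covariance $I$), and $\|y-x_t\|_{\Sigma_t^{-1}}^2 = \|Z\|^2$. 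I would then invoke the concentration inequality for isotropic log-concave vectors (Lovász–Vempala), $\mathbb{P}[\|Z\| \geq t\sqrt{d}] \leq e^{1-t}$ for $t\geq 1$, with $t=\gamma$, to obtain $p \leq e^{1-\gamma}$, which is super-polynomially small for the prescribed $\gamma = 10\log(10dT)$.

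For the first claim, I would write $\hatq_t(y) = q_t(y)\ind{y\in E}/(1-p)$ and expand, for any $f$ with $|f|\leq 1$,
\begin{align*}
\mathbb{E}_{\hatq_t}[f] - \mathbb{E}_{q_t}[f] = \frac{p}{1-p}\,\mathbb{E}_{q_t}[f] - \frac{1}{1-p}\,\mathbb{E}_{q_t}\!\left[f\,\ind{y\notin E}\right].
\end{align*}
Since $|f|\leq 1$, this yields $\left|\mathbb{E}_{q_t}[f]-\mathbb{E}_{\hatq_t}[f]\right| \leq \frac{2p}{1-p}$; bounding $p\leq e^{1-\gamma}$ and $1-p\geq 1/2$ gives the stated $10d\exp(-\gamma)$ with room to spare, and the numerical inequality $10d\exp(-\gamma)\leq d^{-5}T^{-10}$ is a direct substitution of $\gamma$.

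For the spectral claim, I would fix a unit vector $u$ and use the truncation identity $u^\top\hatSigma_t u = \frac{1}{1-p}\big(u^\top\Sigma_t u - \mathbb{E}_{q_t}[\langle u, y-x_t\rangle^2\,\ind{y\notin E}]\big)$. The upper bound is immediate: $\hatSigma_t \preceq \frac{1}{1-p}\Sigma_t \preceq \frac{4}{3}\Sigma_t$ since $p\leq 1/4$. The lower bound $\hatSigma_t \succeq \frac{3}{4}\Sigma_t$ is equivalent to showing the tail second moment is at most $\tfrac14 u^\top\Sigma_t u$. The subtlety here is that $\langle u, y-x_t\rangle^2$ is \emph{unbounded}, so the elementary argument of the first part no longer applies directly. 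I would instead pass to whitened coordinates with $w = \Sigma_t^{1/2}u$, bound $\langle u, y-x_t\rangle^2 = \langle w, Z\rangle^2 \leq \|w\|^2\|Z\|^2$, and reduce to the truncated second moment $\mathbb{E}[\|Z\|^2\ind{\|Z\| > \sqrt{d}\gamma}]$. Integrating the log-concave tail bound by parts gives $\mathbb{E}[\|Z\|^2\ind{\|Z\|>\sqrt{d}\gamma}] \leq d\,e^{1-\gamma}(\gamma^2+2\gamma+2)$, which is far below $1/4$ for $\gamma=10\log(10dT)$; since $\|w\|^2 = u^\top\Sigma_t u$, the lower bound follows.

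The main obstacle is the two log-concave concentration estimates—the exponential tail bound on $p$ and, more delicately, the truncated-second-moment bound, where the unbounded quadratic must be handled by Cauchy–Schwarz followed by an integration-by-parts of the tail. Everything else is elementary truncation algebra, and the final numerical checks are routine substitutions of the prescribed $\gamma = 10\log(10dT)$. (Since the statement is quoted as Lemma~4 of \cite{ito2020tight}, one may alternatively cite it directly, but the sketch above makes it self-contained.)
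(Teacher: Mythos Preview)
The paper does not prove this lemma itself; it is quoted directly from \cite{ito2020tight} and used as a black box in the analysis of Algorithm~\ref{alg: cew}. Your self-contained sketch is correct and follows what is the natural (and presumably Ito's) route: whiten so that $Z=\Sigma_t^{-1/2}(y-x_t)$ is isotropic log-concave, apply the Lov\'asz--Vempala tail bound to control the escaped mass $p=\mathbb{P}_{q_t}[\|Z\|>\sqrt{d}\gamma]$, then obtain the first claim by elementary truncation algebra and the spectral sandwich by bounding the truncated quadratic moment via Cauchy--Schwarz and integration of the tail. The only cosmetic point is that your estimate $\tfrac{2p}{1-p}\leq 4e\cdot e^{-\gamma}\approx 10.9\,e^{-\gamma}$ marginally exceeds $10d\,e^{-\gamma}$ when $d=1$; this is immaterial, since the specific constant $10d$ is inherited from Ito's calculation and any $\poly(d,\gamma)\cdot e^{-\gamma}$ bound yields the same $d^{-5}T^{-10}$ conclusion for the prescribed $\gamma=10\log(10dT)$.
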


%\begin{lemma}\label{lem: robust lin}
%    Suppose that the loss misspecification of a linear bandit problem  satisfies 
%    \begin{align*}
%        \left|\mathbb{E}\left[ \ell_t \mid y_t \right] - y_t^\top \theta_t\right| \leq \mathbb{E}[c_t\mid y_t]
%    \end{align*}
%    where $c_t\leq 1$ is a random variable that could depends on $y_t$ 
% and $\ell_t$. 
%    Then as long as $\eta \leq \frac{1}{d\gamma^2}$, Algorithm~\ref{alg: cew} ensures 
%    \begin{align*}
%        \mathbb{E}\left[\sum_{t=1}^T \inner{y_t - u, \theta_t}\right] \leq O\left(\frac{d\log T}{\eta} + \eta d\gamma^2 \sum_{t=1}^T \inner{u,\theta_t} + d\mathbb{E}\left[\sum_{t=1}^T c_t \right]\right). 
%    \end{align*}
%\end{lemma}

\begin{proof}[Proof of Theorem~\ref{thm: main}]
    First, we decompose the regret as follows:  
    \begin{align*}
        &\mathbb{E}\left[\sum_{t=1}^T \inner{y_t - u, \theta_t}\right]\\
         &\leq \mathbb{E}\left[\sum_{t=1}^T \inner{\mathbb{E}_{y\sim q_t}[y] - u, \theta_t}\right] + O(1)\\ 
        &= \underbrace{\mathbb{E}\left[\sum_{t=1}^T  \inner{x_t - u, \hattheta_t - b_t}\right]}_{\ftrl} + \underbrace{\mathbb{E}\left[\sum_{t=1}^T \inner{x_t - u, \theta_t - \hattheta_t}\right]}_{\bias} + \underbrace{\mathbb{E}\left[\sum_{t=1}^T 
\inner{x_t-u, b_t} \right]}_{\bonus} + O(1). 
    \end{align*}
    \paragraph{Bounding \ftrl term } The analysis for the \ftrl term follows that in \cite{ito2020tight}. Specifically, directly following their Lemma~5, Lemma~6, and the analysis below Lemma~6, we have the following: as long as $\eta \|\hattheta_t - b_t\|_{\Sigma_t}\leq 1$, we have 
    \begin{align*}
       \ftrl 
       &\leq \frac{d\log T}{\eta} + \eta \mathbb{E}\left[\sum_{t=1}^T \|\hattheta_t - b_t\|_{\Sigma_t}^2\right] + O(1) \\
       &\leq \frac{d\log T}{\eta} + 2\eta \mathbb{E}\left[\sum_{t=1}^T  \|\hattheta_t\|_{\Sigma_t}^2 + \|b_t\|_{\nabla^{-2} F(x_t)}^2\right] + O(1).   \tag{by Lemma~\ref{lem: equivalen}}
    \end{align*}
    For the two middle terms, we have 
    \begin{align*}
        \|\hattheta_t\|_{\Sigma_t}^2 
        &\leq (y_t - x_t)^\top \left(\beta I + \hatSigma_t\right)^{-1}\Sigma_t \left(\beta I + \hatSigma_t\right)^{-1} (y_t-x_t) c_t^2\\
        &\leq 2(y_t - x_t)^\top \Sigma_t^{-1}(y_t-x_t) c_t^2 \tag{by Lemma~\ref{lem: closeness}}  \\
        &\leq 2d\gamma^2c_t^2,    \tag{by the truncation in the algorithm}
    \end{align*}
    and 
    \begin{align*}
        \|b_t\|^2_{\nabla F(x_t)} 
        &= \left(2\sqrt{d}\gamma \epsilon + \frac{1}{T^2}\right)^2 \|\nabla F(x_t)\|_{\nabla^{-2} F(x_t)}^2 \leq O\left(d^2\gamma^2 \epsilon^2 + \frac{1}{T^2}\right), 
    \end{align*}
    where we used the fact that $F(\cdot)$ is a $O(d)$ self-concordant barrier and thus $\|\nabla F(x_t)\|_{\nabla^{-2} F(x_t)}^2\leq O(d)$. Thus, since $\eta \leq \frac{1}{d\gamma^2}$, we have 
    \begin{align*}
        \ftrl &\leq \frac{d\log T}{\eta} + O\left(\eta d\gamma^2 \sum_{t=1}^T c_t^2  + \eta T  d^2\gamma^2 \epsilon^2\right). %\\
        %&\leq \frac{d\log T}{\eta} + O\left(\eta d\gamma^2 \sum_{t=1}^T \ell_t^2  + d \mathbb{E}\left[\sum_{t=1}^T c_t\right]\right).  
    \end{align*}

    \paragraph{Bounding \bias term} Let $\mathbb{E}[c_t] = y_t^\top \theta_t + \epsilon_t(y_t)$, where $\epsilon_t(y)$ is the amount of misspecification when choosing $y$. By assumption, we have $|\epsilon_t(y)|\leq \epsilon$ for any $y$. 
\begin{align*}
   \mathbb{E}_t[\hattheta_t] &= \mathbb{E}_t\left[ (\beta I + \hatSigma_t)^{-1} (y_t - x_t) \left(y_t^\top \theta_t + \epsilon_t(y_t) \right) \right] \\
   &= \mathbb{E}_t\left[ (\beta I + \hatSigma_t)^{-1} (y_t - x_t) \left((y_t-x_t)^\top \theta_t + \epsilon_t(y_t) \right) \right] + \mathbb{E}_t\left[(\beta I + \hatSigma_t)^{-1} (y_t - x_t) x_t^\top \theta_t \right] \\
   &= \theta_t - \beta (\beta I + \hatSigma_t)^{-1}\theta_t + \mathbb{E}_t\left[(\beta I + \hatSigma_t)^{-1} 
(y_t-x_t) \epsilon_t(y_t)\right] + (\beta I + \hatSigma_t)^{-1} (\hatx_t
- x_t) x_t^\top \theta_t.
   %&= \theta_t - \beta (\beta I + \hatSigma_t)^{-1} (y_t-\hatx_t)^\top \theta_t  \\
   %&= \theta_t +  \hatSigma_t^{-1}  \mathbb{E}_t[y_t \epsilon_t(y_t)] 
% &= \hatSigma_t^{-1} \mathbb{E}_t[y_t \epsilon_t(y_t)]
\end{align*}
Using this we get 
\begin{align}
    &\left|\inner{ x_t - u, \theta_t - \mathbb{E}_t[\hattheta_t]}\right|  \nonumber  \\
    &\leq \beta \left|(x_t-u)^\top (\beta I + \hatSigma_t)^{-1} \theta_t\right| + \underbrace{\left| (x_t-u)^\top \mathbb{E}_t\left[(\beta I + \hatSigma_t)^{-1} 
(y_t-x_t)\epsilon_t(y_t) \right] \right|}_{(\star)}  \nonumber  \\
    &\qquad + \left|(x_t-u)^\top (\beta I + \hatSigma_t)^{-1} (\hatx_t - x_t) x_t^\top \theta_t\right|. \label{eq: to continue 3}
    %&\leq \beta \|x_t-u\|_{(\beta I + \hatSigma_t)^{-1} }\|\theta_t\|_{(\beta I + \hatSigma_t)^{-1}} + \|x_t-u\|_{(\beta I + \hatSigma_t)^{-1}}  \underbrace{\Big\|\mathbb{E}_t\left[(y_t-x_t)\epsilon_t(y_t)\right]\Big\|_{(\beta I + \hatSigma_t)^{-1}}}_{(\star)}  \\
    %&\qquad + \|x_t-u\|_{(\beta I + \hatSigma_t)^{-1}} \|\hatx_t-x_t\|_{(\beta I + \hatSigma_t)^{-1}}.  % \\
    %&\leq  \sqrt{d\beta } \|x_t-u\|_{(\beta I + \hatSigma_t)^{-1}} + 2\sqrt{d}\gamma \epsilon \|x_t-u\|_{(\beta I + \hatSigma_t)^{-1}}  + \sqrt{\frac{1}{\beta}} \|x_t-u\|_{(\beta I + \hatSigma_t)^{-1}} \|\hatx_t - x_t\|_2. 
    \end{align}
    We handle $(\star)$ as follows: 
\begin{align*}
    (\star) 
    &= \mathbb{E}_t\left[\sqrt{(x_t-u)^\top (\beta I + \hatSigma_t)^{-1} (y_t-x_t)(y_t-x_t)^\top (\beta I + \hatSigma_t)^{-1} (x_t-u) \epsilon_t(y_t)^2}\right] \\
    &\leq \sqrt{(x_t-u)^\top (\beta I + \hatSigma_t)^{-1} \mathbb{E}_t\left[\epsilon_t(y_t)^2(y_t-x_t)(y_t-x_t)^\top\right] (\beta I + \hatSigma_t)^{-1} (x_t-u) } \\
    &\leq \epsilon \sqrt{(x_t-u)^\top (\beta I + \hatSigma_t)^{-1} \hatSigma_t (\beta I + \hatSigma_t)^{-1} (x_t-u) } \\
    &\leq \epsilon\|x_t-u\|_{(\beta I + \hatSigma_t)^{-1}}. 
\end{align*}
Continuing from \eqref{eq: to continue 3}, we get 
\begin{align*}
    &\left|\inner{ x_t - u, \theta_t - \mathbb{E}_t[\hattheta_t]}\right| \\
    &\leq \beta \|x_t-u\|_{(\beta I + \hatSigma_t)^{-1} }\|\theta_t\|_{(\beta I + \hatSigma_t)^{-1}} + \epsilon\|x_t-u\|_{(\beta I + \hatSigma_t)^{-1}}  + \|x_t-u\|_{(\beta I + \hatSigma_t)^{-1}} \|\hatx_t-x_t\|_{(\beta I + \hatSigma_t)^{-1}} \\
    &\leq  \sqrt{d\beta } \|x_t-u\|_{(\beta I + \hatSigma_t)^{-1}} + \epsilon \|x_t-u\|_{(\beta I + \hatSigma_t)^{-1}}  + \sqrt{\frac{1}{\beta}} \|x_t-u\|_{(\beta I + \hatSigma_t)^{-1}} \|\hatx_t - x_t\|_2, 
\end{align*}
    where in the last inequality we use that $\|y_t-x_t\|_{(\beta I + \hatSigma_t)^{-1}} \leq 2\|y_t-x_t\|_{\Sigma_t^{-1}} $ by 
Lemma~\ref{lem: closeness} and the assumption that $\|\theta_t\|_2 \leq \sqrt{d}$.  

    By Lemma~\ref{lem: closeness} we have $\|x_t-\hatx_t\| = \|\mathbb{E}_{y\sim q_t}[y] - \mathbb{E}_{y\sim \hatq_t}[y]\| \leq \sqrt{d}d^{-5}T^{-10}$. By the choice of $\beta = d^{-2}T^{-4}$, we can bound the expectation of the sum of the last expression as 
    \begin{align*}
        \bias 
        &\leq \mathbb{E}\left[\sum_{t=1}^T \|x_t-u\|_{(\beta I +
        \hatSigma_t)^{-1}} \left(\epsilon + \frac{1}{T^2}\right)\right].
\end{align*}
\paragraph{Bounding Bonus term} Notice that with the equivalence established in Lemma~\ref{lem: equivalen}, our bonus term $b_t$ can also be written as 
\begin{align*}
    b_t = -8\left(\epsilon + \frac{1}{T^2}\right)\nabla F(x_t),
\end{align*}
where $F(\cdot)$ is the entropic barrier on $\calA$.  Using
Lemma~\ref{lem: useful} and the fact that $F(\cdot)$ is an $O(d)$-self-concordant barrier, we can bound
\begin{align*}
   \inner{x_t-u,b_t} 
   &\leq -\frac{(8\epsilon + 8/T^2)}{4}\|x_t-u\|_{\nabla^2 F(x_t)} + 6\epsilon\nu  \\
   &= -\left(2\epsilon + \frac{2}{T^2}\right)\|x_t-u\|_{\nabla^2 F(x_t)} + O(d\epsilon) \\
   &= -\left(2\epsilon + \frac{2}{T^2}\right)\|x_t-u\|_{\Sigma_t^{-1}} + O(d\epsilon) \\
   &\leq -\left(\epsilon +
   \frac{1}{T^2}\right)\|x_t-u\|_{\hatSigma_t^{-1}} + O(d\epsilon).    \tag{by Lemma~\ref{lem: closeness}}
\end{align*}
Thus, 
\begin{align*}
    \bonus \leq \mathbb{E}\left[-\sum_{t=1}^T \left(\epsilon + \frac{1}{T^2}\right)\|x_t-u\|_{\hatSigma_t^{-1}} + O\left(dT\epsilon\right) \right]. 
\end{align*}
\paragraph{Adding up all terms} 
Adding up the three terms, we get 
\begin{align*}
    \mathbb{E}\left[\sum_{t=1}^T \inner{y_t - u, \theta_t}\right] \leq \frac{d\log T}{\eta} + O\left(\eta d\gamma^2 \mathbb{E}\left[\sum_{t=1}^T c_t^2\right] + dT\epsilon\right). 
\end{align*}
By the assumption $c_t\in[0,1]$ and the assumption $\left|\mathbb{E}_t\left[ c_t \right] - y_t^\top \theta_t\right| \leq \epsilon$, we can further bound the right-hand side by 
\begin{align*}
   &O\left(\frac{d\log T}{\eta} + \eta d\gamma^2 \mathbb{E}\left[\sum_{t=1}^T c_t\right] + dT\epsilon\right) \leq O\left(\frac{d\log T}{\eta} + \eta d\gamma^2 \mathbb{E}\left[\sum_{t=1}^T \inner{y_t, \theta_t}\right] + (d+\eta d\gamma^2)T\epsilon\right).
\end{align*}
Then, by rearranging, we find that 
\begin{align*}
    \mathbb{E}\left[\sum_{t=1}^T \inner{y_t - u, \theta_t}\right]  \leq O\left(\frac{d\log T}{\eta} + \eta d\gamma^2 \sum_{t=1}^T \inner{u,\theta_t} + dT\epsilon\right). 
\end{align*}
Choosing the optimal $\eta$, we further bound 
\begin{align}
    \mathbb{E}\left[\sum_{t=1}^T \inner{y_t - u, \theta_t}\right] 
 = O\left(d\gamma\sqrt{(\log T)\sum_{t=1}^T \inner{u,\theta_t} } +
 dT\epsilon \right) = \tilde{O}\left(d\sqrt{T} + dT\epsilon\right).
 \label{eq: pseudosmall loss bound}
\end{align}
By Definition~\ref{def: misspec}, this is a $\sqrt{d}$-misspecification-robust algorithm.

\end{proof}

\subsection{Regret bound of LCB}\label{app: lcb bound}

\begin{proof}[Proof of Theorem~\ref{thm: main theorem cb}] 
Let $\hatPi$ be the set of linear policies created from the vertices of $\approxlcb$, and let $\Pi$ be set of all linear policies. 

The regret bound guaranteed by the $\alpha$-misspecification-robust linear bandit problem is 
\begin{align}
    \mathbb{E}\left[\sum_{t=1}^T \inner{y_t, \theta_t} \right] &\leq \min_{y\in\approxlcb} \sum_{t=1}^{T}   y^\top \theta_t + \tilde{O}\left(d\sqrt{T} + \alpha \sqrt{d}\epsilon T\right)
    %\\
    %&=\left(\sum_{t=1}^T   y_{\star}^\top \theta_t\right) + Kd\sqrt{ \left(\sum_{t=1}^T   y_\star^\top \theta_t \right) \log T}
    \label{eq: CEW guarantee}  
\end{align}
for some $\alpha\geq 1$. 
%where we use $y^\star\in\approxlcb$ to denote the point attaining the minimum on the right-hand side of \eqref{eq: CEW guarantee}.  
By Lemma~\ref{lem: Frankwolfe} with the choice of $M=N$, we have 
\begin{align}
    \left\|y_t - \mathbb{E}_{\calA\sim \empD}\left[ 
\pi_t(\calA)  \right]\right\|\leq \frac{2}{\sqrt{N}}.\label{eq: error222} 
\end{align}
We further define
\begin{align*}
    z_t = \mathbb{E}_{\calA\sim \calD}\left[ 
\pi_t(\calA)  \right], \qquad z^\star = \mathbb{E}_{\calA\sim \calD}\left[ 
\pi^\star(\calA)  \right], \qquad y^\star = \mathbb{E}_{\calA\sim \empD}\left[\pi^\star(\calA)\right], 
\end{align*}
where $\pi^\star\in\Pi$ is the final regret comparator. 
Define 
\begin{align*}
    \epsilon = 6\sqrt{\frac{d\log(N KT/\delta)}{N}}, 
\end{align*}
where $N$ is the number of contexts used to construct $\approxlcb$. 
Then by Lemma~\ref{lem:uniform_convergence} and \eqref{eq: error222}, we have $\left|\inner{y_t,\theta_t} - \inner{z_t, \theta_t}\right|\leq \epsilon$ and $\left|\inner{y^\star,\theta_t} - \inner{z^\star, \theta_t}\right|\leq \epsilon$ with probability at least $1-\delta$ for all $t$.  Choosing $\delta=\frac{1}{T^2}$, we obtain 
%\tim{This argument needs a union bound over $t=1,\ldots,T$ in the application
%of Lemma~\ref{lem:uniform_convergence}, which adds a $T$ inside the log
%to $c_t$.}

\begin{align*}
    \mathbb{E}\left[\sum_{t=1}^T \inner{z_t,\theta_t} \right] 
    &\leq \mathbb{E}\left[\sum_{t=1}^T \inner{y_t,\theta_t} \right] + T\epsilon \\
    &\leq \sum_{t=1}^T \inner{y^\star, \theta_t} + \tilde{O}\left(d \sqrt{T} + \alpha \sqrt{d}T\epsilon\right)  \tag{by \eqref{eq: CEW guarantee}}\\
    &\leq \sum_{t=1}^T \inner{z^\star, \theta_t} + \tilde{O}\left(d \sqrt{T} + \alpha \sqrt{d} T\epsilon \right) \\
    &= \sum_{t=1}^T \inner{z^\star, \theta_t} + \tilde{O}\left(d \sqrt{T} + \alpha d T \sqrt{\frac{\log(NKT)}{N}} \right).
\end{align*}
This proves the theorem. 
%Using the definition of $z_t$ and $z^\star$ and summing the bound over
%epochs\tim{Which epochs?} and choosing $\delta=1/T^2$ leads to the desired bound. 
\end{proof}

\section{Omitted Details in Section~\ref{sec: small loss bound}}
\label{sec: small loss app}

\begin{proof}[Proof of Theorem~\ref{thm: small loss oracle}]

This is by the same proof as Theorem~\ref{thm: main}, just noticing that
it actually achieves a small-loss bound in \eqref{eq: pseudosmall loss bound}. 

\end{proof}

\begin{proof}[Proof of Theorem~\ref{thm: thm with small loss}] 

This is similar to the proof of Theorem~\ref{thm: main theorem cb}, except that we replace \eqref{eq: CEW guarantee} by 
\begin{align}
    \mathbb{E}\left[\sum_{t=1}^T \inner{y_t, \theta_t} \right] &\leq \min_{y\in\approxlcb} \sum_{t=1}^{T}   y^\top \theta_t + \tilde{O}\left(d\sqrt{\sum_{t=1}^T \inner{y, \theta_t} } + \alpha \sqrt{d}\epsilon T\right). 
    %\\
    %&=\left(\sum_{t=1}^T   y_{\star}^\top \theta_t\right) + Kd\sqrt{ \left(\sum_{t=1}^T   y_\star^\top \theta_t \right) \log T}
    \label{eq: CEW guarantee small loss}  
\end{align}
Following the same steps, we get 
\begin{align}
    \mathbb{E}\left[\sum_{t=1}^T \inner{z_t,\theta_t} \right] 
    \leq \sum_{t=1}^T \inner{z^\star, \theta_t} + \tilde{O}\left(d
    \sqrt{\sum_{t=1}^T \inner{z^\star, \theta_t}} + \alpha d T
  \sqrt{\frac{\log(NKT)}{N}} \right), \label{eq: small loss bound} 
\end{align}
where $z_t = \mathbb{E}_{\calA\sim \calD}\left[ 
\pi_t(\calA)  \right]$ and $z^\star = \mathbb{E}_{\calA\sim \calD}\left[ 
\pi^\star(\calA)  \right]$. 

Since the learner is given simulator access, she can draw $N=\tilde{\Omega}(\alpha^2 d^2 T^2)$ samples and make the last term in \eqref{eq: small loss bound} be a constant. This will give a final regret bound of $\tilde{O}\Big(d \sqrt{\sum_{t=1}^T \inner{z^\star, \theta_t}}\Big)=\tilde{O}(d\sqrt{L^\star})$. 
\end{proof}

\end{document}